\newtheorem{thm}{Theorem}[section]
\newtheorem{lem}[thm]{Lemma}
\newcommand{\vtt}[1]{%
  \text{\normalfont\ttfamily\detokenize{#1}}%
}
\DeclareMathOperator*{\argmin}{arg\,min}
\LetLtxMacro{\reallyincludegraphics}{\includegraphics}
\renewcommand{\includegraphics}[2][]{\iftoggle{graphics}{\reallyincludegraphics[#1]{#2}}{}}
\begin{document}

\title{Learning Without Loss} 

\author{\name Veit Elser \email ve10@cornell.edu \\
       \addr Department of Physics\\
       Cornell University\\
       Ithaca, NY 14853-2501, USA
      }

\editor{?}

\date{}

\maketitle

\begin{abstract}
We explore a new approach for training neural networks where all loss functions are replaced by hard constraints. The same approach is very successful in phase retrieval, where signals are reconstructed from magnitude constraints and general characteristics (sparsity, support, etc.). Instead of taking gradient steps, the optimizer in the constraint based approach, called relaxed-reflect-reflect (RRR), derives its steps from projections to local constraints. In neural networks one such projection makes the minimal modification to the inputs $x$, the associated weights $w$, and the pre-activation value $y$ at each neuron, to satisfy the equation $x\cdot w=y$. These projections, along with a host of other local projections (constraining pre- and post-activations, etc.) can be partitioned into two sets such that all the projections in each set can be applied concurrently --- across the network \textit{and} across all data in the training batch. This partitioning into two sets is analogous to the situation in phase retrieval and the setting for which the general purpose RRR optimizer was designed. Owing to the novelty of the method, this paper also serves as a self-contained tutorial. Starting with a single-layer network that performs non-negative matrix factorization, and concluding with a generative model comprising an autoencoder and classifier, all applications and their implementations by projections are described in complete detail. Although the new approach has the potential to extend the scope of neural networks (e.g. by defining activation not through functions but constraint sets), most of the featured models are standard to allow comparison with stochastic gradient descent.

\end{abstract}

\begin{keywords}
  Neural Networks, Training Algorithms, Constraint Satisfaction
\end{keywords}



\section{Introduction}\label{sec:intro}

When general purpose computers arrived in the 1960s it was realized that certain tasks, such as sorting and Fourier transforms, would be so ubiquitous that it made sense to implement them with provably optimal algorithms. In the present day, as neural networks have become ubiquitous in machine learning systems, the optimality of training algorithms has likewise been the subject of intense research. The expressivity of neural networks makes them attractive for diverse applications but is also the origin of their complexity. Even with the simple piecewise-linear ReLU activation function, the number of linear pieces utilized by a network can in principle grow exponentially with the number of neurons. And while there is choice of loss function to apply to the training task, the inherent complexity of the models makes proving optimality, for any loss, well beyond reach.

Faced with the theoretical intractability of neural network training, it is not surprising that research has narrowed on a single empirical strategy: gradient descent. Central to this method of training is a loss function that encapsulates everything relevant to the application, from the definition of class boundaries, to the structure of internal representations, to details such as model sparsity and parameter quantization. Even with the focus of using gradient information to minimize the loss, there are many options (e.g. batch normalization) that need to be evaluated empirically. Advances are, justifiably, incremental. The result is that the theory of neural network training has become a single evolving paradigm.

It would be audacious to propose a fundamentally different approach to neural network training were it not for the fact that there already is an empirically tested alternative with a strong track record. This is the method of optimization that has evolved in the field of \textit{phase retrieval}. Though the analogy is far from perfect, phase retrieval also deals with very large data sets and seeks to discover  representations of data that are meaningful. More significantly, the most successful algorithms for phase retrieval are not based on gradient descent. In this paper we apply these same techniques to the training of neural networks. As with gradient descent there are many options and we present only a particular approach that is both flexible and empirically successful.

Phase retrieval uses the measured magnitudes of a complex-valued signal, together with generic properties (signal support, sparsity) to reconstruct the signal's phases. It is possible to define loss functions for phase retrieval and attempt the discovery of the phases with gradient-based  methods. This approach, called Wirtinger flow \citep{candes2015phase}, has led to a recent revival of interest in the theoretical problem. However, this line of research has not produced any practical algorithms to displace the non-gradient algorithms that are used in applications. Instead of minimizing a loss function, these algorithms try to discover a point $x_\mathrm{sol}$ that lies in the intersection of two sets:
\begin{equation}\label{AintB}
x_\mathrm{sol}\in A\cap B.
\end{equation}
The two sets live in a high dimensional Euclidean space, not unlike the space of parameters for a network trained on images. In phase retrieval $A$ is the set of all images with given (Fourier-transform) magnitudes and $B$ is all images having a particular support or number of atoms (sparsity). The details of these two constraint sets only enter the algorithm through the action of two \textit{constraint projections}, $P_A$ and $P_B$. These provide exact solutions of two global subproblems: for arbitrary $x$, find points $x_A=P_A(x)\in A$ and $x_B=P_B(x)\in B$ on the two constraint sets that are proximal to $x$.

Importing the methodology of phase retrieval to neural networks is mostly about formulating the training problem as an instance of \eqref{AintB} for suitable $A$ and $B$, with the additional property that the corresponding constraint projections can be computed efficiently. Before we preview our approach to this, we describe a general-purpose algorithm for solving \eqref{AintB} and contrast it with gradient descent.

We will train neural networks with the \textit{relaxed-reflect-reflect} (RRR) algorithm, probably the simplest of the algorithms that are successful in phase retrieval. Both RRR and the stochastic gradient descent (SGD) algorithm are iterative with a time-step (learning rate) parameter $\beta$. In the limit $\beta\to 0$ both methods define a flow:
\begin{equation}\label{flow}
\dot{x}=F(x).
\end{equation}
In SGD $x$ is the vector of network parameters and the vector field $F$ is the gradient of the loss (including regularization terms) with respect to those parameters. In the alternative RRR method the vector $x$ in \eqref{flow} also includes the node values, pre- and post-activation, for some number of instantiations of the network. More significantly, the flow field for RRR,
\begin{equation}
F(x)=P_B(2 P_A(x)-x) - P_A(x),
\end{equation}
is not the gradient of any function. Fixed points of the flow, defined by $F(x^*)=0$, are a problem for SGD training because optimization ceases without the guarantee that the loss is zero. For RRR this same condition implies
\begin{equation}
P_B(2 P_A(x^*)-x^*) = P_A(x^*)=x_\mathrm{sol},
\end{equation}
or a solution to \eqref{AintB} because the $x_\mathrm{sol}$ so defined lies in both $A$ and $B$. The relationship between $x^*$ and $x_\mathrm{sol}$ is illustrated in Figure 1 for sets $A$ and $B$ that locally are flats in the ambient space (red and green lines), a model that applies to our use of the algorithm. In a feasible instance (left panel) with unique solution $A\cap B=\{x_\mathrm{sol}\}$, the orbit of $x$ converges to any point $x^*$ in the space orthogonal to $A$ and $B$ at $x_\mathrm{sol}$ (black dashed line). All of these fixed points are associated to the same solution point, $x_\mathrm{sol}$. When the instance is infeasible (right panel) the orbit converges to the same space it did in the feasible instance except that now it also moves uniformly within this space and away from the ``near intersection" of $A$ and $B$. This behavior is also desirable as it locates points $x_A=P_A(x^+)$ and $x_B=P_B(2 P_A(x^+)-x^+)$, associated to the asymptotic orbit $x^+$, that are proximal on the two constraint sets (minimize $\|F\|$). In the neural network setting finding a best approximate solution arises in the training of autoencoders.

\begin{figure}[t]
\begin{center}
\includegraphics[width=6.in]{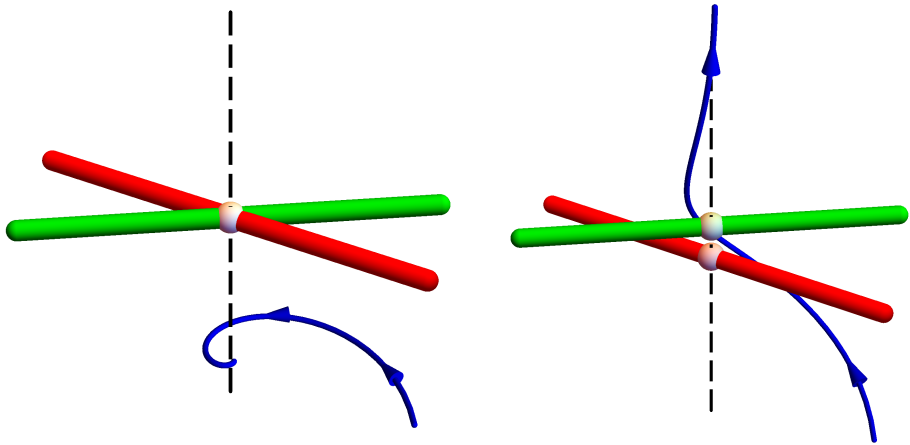}
\end{center}
\caption{RRR orbits (blue) for feasible (left) and infeasible (right) instances.}
\label{fig:fig1}
\end{figure}

An interesting contrast between RRR and gradient descent algorithms is how they manage to avoid getting stuck. In strict gradient descent with loss function $\mathcal{L}$ and flow $F=-\nabla \mathcal{L}$, the loss is monotonically decreasing. The leading strategies for avoiding local minima where $\mathcal{L}>0$ are (i) the stochastic estimation of $F$ (SGD algorithm), thereby relaxing monotonicity, and (ii) exploiting application-specific structure when initializing the flow \citep{candes2015phase}. A third strategy, having special relevance to machine learning, is to use SGD only for highly over-parametrized models, where the loss landscape may be free of traps.

The speed of the flow, $v=\|F\|$, is weakly analogous to loss for the RRR algorithm. Like a restless shark, RRR keeps moving while $v>0$, ceasing only when the solution ($v=\|F\|=0$) is in its maw. However, unlike $\mathcal{L}$ in gradient descent, $v$ does not decrease monotonically under RRR evolution. An easy way to show that the RRR flow field $F$ cannot be the gradient of a function is to construct toy examples (particular sets $A$ and $B$) in two dimensions where the flow has limit-cycle behavior. Limit cycles, if abundant, pose a possible trapping mechanism for RRR, analogous to the local minima faced by SGD. Empirically we have seen very little evidence of limit-cycle trapping, even in challenging small models  where gradient methods consistently fail because of the local minimum problem.

There are roughly three kinds of constraints that define the sets $A$ and $B$ in our application of RRR to neural network training. These are constraints associated with (i) neuron inputs, (ii) neuron outputs, and (iii) ``consensus" for replicated variables. The input constraints, one for each neuron, have the form
\begin{equation}\label{inputcon}
x\cdot w=y,
\end{equation}
where $w$ and $x$ are the vectors of weights and outputs of other neurons that combine to give the neuron's pre-activation value $y$. We consign this constraint to set $B$, so that $P_B$ applied to an arbitrary $(w,x,y)$ finds the distance minimizing change $(w,x,y)\to (w_B,x_B,y_B)$ that satisfies \eqref{inputcon}. We see that weights $w$ and neuron outputs $x$ are treated more on an equal footing than they are in SGD training, where changes to the neuron outputs appear only implicitly in the backpropagation computations. From each neuron's perspective, changes to $w$ are forward-looking (striving to fix things in higher layers), while changes to $x$ do the opposite (backward-looking, forcing changes in lower layers).

The need for consensus constraints becomes clear when we want to be able to project to the input constraints \eqref{inputcon} independently for each neuron and each data instantiation of the network. A more explicit rewriting of \eqref{inputcon} that makes this possible is
\begin{equation}\label{neuronin}
\forall\; k,j\colon \sum_i x[k,i\to j]\;w[k,i\to j]=y[k,j],
\end{equation}
where $k$ is a data index (network instantiation), $j$ labels the receiving neuron, and the sum is over neurons $i$ whose outputs $x[k,i\to j]$ are incident on $j$. Clearly a neuron $i$ should not be allowed to take different values depending on which neuron $j$ is receiving its value. To insure that this does not happen we impose the following consensus constraints:
\begin{equation}\label{consensx}
\forall\; k,i,j\colon x[k,i\to j]=x_A[k,i].
\end{equation}
Here $x_A[k,i]$ is the consensus value and the subscript indicates it is associated with the set $A$. Similarly, because the weight on edge $i\to j$ should not be allowed to take different values for each data item $k$ we impose another consensus constraint:
\begin{equation}
\forall\; k,i,j\colon w[k,i\to j]=w_A[i\to j].
\end{equation}
By consigning these consensus constraints to set $A$ which is inactive when we perform $P_B$ in the RRR algorithm, the neuron-input constraint  \eqref{neuronin} of set $B$ is able to work with local (otherwise unconstrained) variables.

The third type of constraint implements the activation function $f$ that connects the pre- and post-activation neuron values:
\begin{equation}\label{actcon}
\forall\; k,j\colon f\left(y[k,j]-b[k,j]\right)=x_A[k,j].
\end{equation}
By consigning this to the $A$ constraints, all the $y$ variables, both in \eqref{neuronin} and \eqref{actcon}, are local (within sets $B$ and  $A$, respectively). As with the weights we have had to replicate the bias parameters $b$ to make the constraint local to each data item $k$. Consensus is now imposed in set $B$ (as they are allowed to be independent in constraint $A$):
\begin{equation}
\forall\; k,j\colon b[k,j]=b_B[j].
\end{equation}

Depending on the neural network application (classifier, autoencoder, etc.) there may be modifications to \eqref{consensx} and \eqref{actcon} at input-layer, output-layer or code-layer neurons. There may also be constraints for parameter regularization. We describe these in detail, and their membership in $A$ or $B$, in later sections where we study specific applications.

\begin{figure}[t]
\begin{center}
\includegraphics[width=3.in]{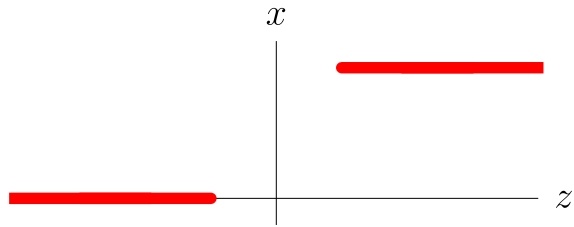}
\end{center}
\caption{Step activation function reinterpreted as a constraint set (red).}
\label{fig:fig2}
\end{figure}

Now that the components of the phase retrieval inspired approach to neural network training have been introduced, we can highlight some of the features that make it attractive. These derive from the greater power that projections potentially have to offer over the gradient moves that underlie nearly all current algorithms. Consider the activation function constraint \eqref{actcon}. Not only is $f$ not required to be a piecewise regular function, it need not even be a function! A good example is the modification of the step ``function" shown in Figure 2. Interpreted as a locus of pre/post-activation pairs $(z=y-b,x)$, a projection $(z,x)\to (z_A,x_A)$ is easily computed even if there is a gap in the domain of the function definition as shown. Such a gap is analogous to the margin parameter normally reserved for the output layer in a classifier. By introducing a margin in the activation, individual neurons will be forced to make unambiguous choices during training, a feature that may improve generalization. After training, if a pre-activation $z$ falls within the gap, $f$ would be interpreted as the standard step function with discontinuity at $z=0$.

Another advantage of projections is that they provide a direct mechanism for imposing structural properties that gradient methods must do indirectly, and with no guarantees, via terms in the loss function. An example of this arises in non-negative matrix factorization, when the learned non-negative feature vectors are also required to be $s$-sparse. Gradient-based methods would introduce a differentiable, sparsity-promoting regularizer such as the 1-norm on the weights. By contrast, projection to a sparsity constraint is direct and in fact much used in phase retrieval: all negative components of the feature vector are set to zero and the remaining positive components are sorted and all but the $s$ largest are also set to zero.

A more elaborate example of the kind of constraint just described, included in our survey of applications, might arise when we suspect some fraction $p$ of the data in supervised learning has wrong labels. In this case we would choose to train on rather large batches, each having say 1000 items. For each item in the batch the projection to the correct-class-constraint at the output layer would be computed both on the assumption that the label is correct and also for the case that the label is wrong. Both hypotheses have a projection distance, and the distance-increases to the stronger (``label is correct") constraint are sorted for the 1000 data. The wrong-label hypothesis/projection option would be applied to the $p\times 1000$ data having the greatest distance increases. 

Because the two projections at the core of RRR can leverage the power of many standard data analysis algorithms (e.g. SVD for projecting to a low-rank constraint), the scope of RRR in machine learning is potentially very broad. However, in this study our focus is relatively narrow: a scheme for training standard network models based on the particular, neuron-centric constraint \eqref{inputcon}. The method is developed through a series of models of increasing sophistication, from non-negative matrix factorization to representation learning. By using a natural warm-start procedure on batches, and varying the batch size, we explore both the on-line case of small batches and the off-line mode where projections are applied potentially to the entire data set. Projections take the place of back-propagation and have a comparable operation count. Each parameter update (iteration) of RRR has about as many operations as one SGD step, scaling as the product of the size of the batch and the number of edges in the network.

To help build the case that constraint-based, loss-free optimization could serve as a low-level computational framework for training, all the software for our numerical experiments was implemented with $C$ programs that only call the standard $C$ libraries. Our code runs serially, on a single thread, and without GPU acceleration. Thanks to the ``split" nature of the constraints, parallelization could easily have been introduced and indeed it is this feature that partly motivated the related work that we review next.

\section{Relationship to prior work}\label{sec:ADMM}

The idea of demoting the pre- and post-activation states of the neurons, from known values determined by forward propagation of the data, to variables that have to be solved along with the network parameters, is not new and has been explored by several groups. Central to this strategy is a scheme for splitting the augmented set of variables into groups amenable to exact, local optimization. For neural network training the two most popular named methods, for acting on the split variables, are alternating direction method of multipliers (ADMM) and block coordinate descent (BCD). In an influential study,  \cite{taylor2016training} used ADMM in networks split along layers, where optimization alternates between the weights of individual layers and the activations that join them, in rough correspondence with, respectively, our constraints \eqref{neuronin} and \eqref{actcon}. By also splitting with respect to data items, \cite{taylor2016training} are able to train on the entire data in aggregate (not sequentially). More recently, \cite{choromanska2018beyond} have compared the ADMM and BCD variants, also using layer-wise network splitting, but where data are processed individually, more in the style of SGD.

Below is a summary of the ways in which our work differs from, or goes beyond, previous work:
\begin{itemize}
\item No loss functions are used.
\item Networks are split on a finer scale: neurons rather than layers.
\item Our optimizer, RRR, is entirely built from projections.
\item Through warm-start initialization we can train on batches of arbitrary size, up to the entire data set.
\item Variations of the same framework are shown to perform non-negative matrix factorization, classification, and representation learning.
\item The flexibility of the constraint formulation allows us to build a ``relabeling classifier" and a non-adversarial generative model.
\item We demonstrate learning on hard, small models where gradient methods fail.
\end{itemize}
The RRR algorithm, with its projections as computational primitives, has a closer relationship to ADMM than it might seem. First, when using indicator functions for the two objective functions in the ADMM formalism, the minimization steps in ADMM reduce to projections. Moreover, in this projection setting the ``unrelaxed" ADMM iteration turns out to be exactly equivalent to the RRR iteration with time-step $\beta=1$. On the other hand, the most direct way to understand why RRR works at all is to consider $\beta\to 0$ (Figure \ref{fig:fig1}) where RRR and ADMM differ. The analysis of this limit and details on the RRR/ADMM relationship are given in appendix \ref{sec:RRRappendix}.

\section{Organization and notation}\label{sec:notation}

This paper is written in the style of a tutorial. Unsupervised training in the loss-free style of learning is provided through a series of examples. The examples are ordered with increasing complexity and correspond to the three types of network shown in Figure \ref{fig:fig3}. Readers only interested in deep networks should begin with the first of these, on non-negative matrix factorization (NMF), as many of the same elements are used even in this simplest case. Proofs of mathematical results are placed in the appendix for readability.

\begin{figure}[t]
\begin{center}
\includegraphics[width=5.5in]{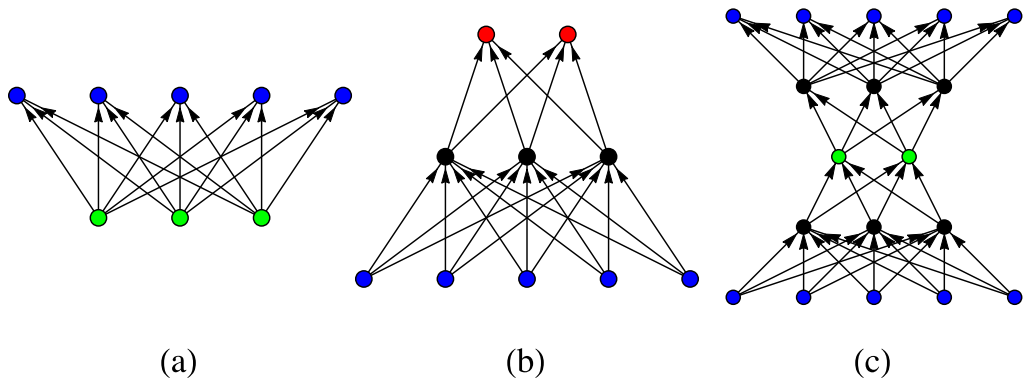}
\end{center}
\caption{The three types of network featured in this tutorial: (a) single layer network for non-negative matrix factorization, (b) multi-layer classifier network, (c) autoencoder network. Networks are not required to be layered, except for special sets of nodes : data layer (blue), code layer (green), class layer (red). In the autoencoder network the single data layer is rendered twice (top and bottom) and the network is cyclic.}
\label{fig:fig3}
\end{figure}

We use a uniform set of notational conventions in all the examples. Variables and parameters live on directed graphs, even in the NMF application. In our neuron-centric scheme the notion of layers arises only in the designation of the network inputs and outputs; in the autoencoder example there is also a code layer. Indices $i$ and $j$ label nodes and $i\to j$ is the label for the edge from $i$ to $j$. The index $k$ is reserved for the data item label. Inference, or feed-forward processing in the network, is always defined by the equations
\begin{subequations}\label{feedforward}
\begin{align}
y[k,j]&= \sum_i x[k,i]\;w[i\to j]\\
x[k,j]&=f(y[k,j]-b[j]),
\end{align}
\end{subequations}
where $w$ are the weights of the network and $b$ the bias parameters. The variables $x$ and $y$ will always be post- and pre-activation values of the nodes; the bias and activation function $f$ that relates them is absent in NMF. Note that the $x$ in the general discussion of the RRR algorithm (section \ref{sec:intro} and appendix) is a search vector that includes all the variables and parameters in the optimization, not just the node values. On an acyclic graph the nodes can be labeled such that $i<j$ for every edge $i\to j$ and inference is well defined by evaluating \eqref{feedforward} for $j$ increasing sequentially. In our constraint-based training even the acyclic property can be relaxed. All of our applications feature layered networks; networks with simple cycles appear in the autoencoder example.

Upper case symbols denote sets, such as the geometrical constraint sets $A$ and $B$ of RRR. We use $K$ for the set of data items in a batch. $E$ is always the set of edges, $D$ the set of \textit{data} nodes, and $C$ is the set of \textit{class} nodes in a classifier or \textit{code} nodes in an autoencoder. The cardinality of discrete sets is indicated by vertical bars, as in the number of edges $|E|$, and $\|\cdots\|$ is always the Euclidean norm.

Greek symbols are reserved for hyperparameters. RRR has a time-step parameter $\beta$ and the parameter $\gamma$ introduced in the appendix. While the flow limit $\beta\to 0$ is the easiest to understand, in most of our experiments we use the Douglas-Rachford value $\beta=1$. The effect of changing $\gamma$ from the standard choice $\gamma=1$ has not been explored. Margins, both at the output of a classifier and for step-activation, are parameterized by $\Delta$. All weights are normalized with norm $\Omega$.

\section{Non-negative matrix factorization}\label{sec:NMF}

In non-negative matrix factorization (NMF) one tries to express non-negative data vectors $y_1,y_2,\ldots$ as non-negative mixtures of a set of non-negative feature vectors $w_1,w_2,\ldots, w_n$. If the data vectors have length $m$, then in terms of the $m\times n$ matrix of feature vectors $W$, we seek a representation of the data as $y_1= W x_1$, $y_2=W x_2$, ... where the mixture vectors $x_1, x_2,\ldots$ have length $n$. When there are $|K|$ data, arranging the data vectors and mixture vectors into matrices as well, the $|K|$ representations take the form of the factorization $Y=W X$.

If there exists a factorization where both $W$ and $X$ have no zeros, then it is highly non-unique because $W'=W A$ and $X'=A^{-1}X$ gives another NMF for an arbitrary $n\times n$ matrix $A$ suitably close to the identity. This $n^2$-dimensional family of factorizations collapses to a much smaller one, and NMF becomes considerably harder, when $W$ or $X$ or both have zeros. In general, including this case, a NMF is always non-unique with respect to permuting and scaling the $n$ columns of $W$. We will take advantage of this freedom, and also compactify the NMF problem, by insisting that the columns of $W$ have a fixed norm of our choosing.

In ``exact NMF" the minimum number of feature vectors in a factorization of the data matrix $Y$ is its positive rank, $r_+(Y)$. Since the positive rank is lower-bounded by the ordinary rank, and real-world data matrices are usually full rank, finding an exact NMF is not the goal in most data science applications. Instead, one specifies the number of feature vectors $n$ and seeks an approximate factorization $Y\approx W X$ with $n$ as the rank. The RRR algorithm can be used for this case too, through its ability to approximately satisfy constraints.

In the online setting, NMF is a one-layer neural network comprising a layer of mixture or \textit{code} nodes $C$ that feed forward to a layer of data nodes $D$ (Figure \ref{fig:fig3}(a)). The network weights $w[i\to j]$ that connect code nodes $i\in C$ to data nodes $j\in D$ are the feature vectors of NMF. While there are no network inputs in the usual sense, training is still defined on data batches $K$, if somewhat indirectly. The network is tasked to learn weights $w$ such that for each data vector $y[k]$, $k\in K$, there exists a corresponding non-negative code vector $x[k]$ that when fed through the network gives a close approximation to $y[k]$. So in addition to learning weights, the network must also learn the code that goes with each data item in the batch.

Provided only that the matrix of weights $W$ in an approximate factorization $Y\approx W X$ of a batch is full rank, an encoder can be constructed starting with the standard pseudo-inverse
\begin{equation}\label{Winv}
\mathcal{E}_W=(W^T W)^{-1}W^T.
\end{equation}
This $n\times m$ matrix may be interpreted as the weights in the encoding stage of an autoencoder, and should be followed by a ReLU activation correction to ensure the codes $x$ are non-negative:
\begin{equation}\label{NMFencode}
x=\mathrm{ReLU}(\mathcal{E}_W\,y).
\end{equation}
We emphasize that this encoding stage only plays a very small part in the training. For modestly sized data sets, where we can process all the data  as a single batch, it plays no role at all. For larger data sets, where the data must be partitioned into batches, we use \eqref{NMFencode} only in the initialization of the RRR solution process (code vectors $x$ initialized for new data $y$ to values determined by weights learned in the previous batch).

\subsection{Constraints}

The variables in our constraint formulation of NMF follow the same pattern we will use in all the other examples: a weight $w[k,i\to j]$ and node variable $x[k,i\to j]$ for each data item $k$ and edge $i\to j$ of the network. In NMF these are the only variables. The node variables $x$ will hold the code vectors and have been replicated, on the edges incident to each code node, in order to split the constraints into independent sets $A$ and $B$. The latter are listed below:
\begin{subequations}
\begin{align}
\mbox{$A$ constraints}\qquad\qquad\qquad&\nonumber\\
\forall\,k\in K, i\in C, j\in D&:\quad x[k,i\to j]=x_A[k,i]\ge 0\label{NMFA1}\\
\forall\,k\in K, i\in C, j\in D&:\quad w[k,i\to j]=w_A[i\to j]\ge 0\label{NMFA2}\\
\forall\,i\in C&:\quad \sum_j w_A^2[i\to j]=\Omega^2\label{NMFA3}\\
\mbox{$B$ constraints}\qquad\qquad\qquad&\nonumber\\
\forall\,k\in K,j\in D&:\quad \sum_i x[k,i\to j]\,w[k,i\to j]=y[k,j].\label{NMFB4}
\end{align}
\end{subequations}
Note that $x_A$ and $w_A$ are not variables but shorthand for consensus values in the constraint. The reason for making the weight norm $\Omega$ a hyperparameter, and not arbitrarily setting it to 1, is discussed below.

Two things need to be checked for any constraint scheme, such as the one above. The first is that the satisfaction of all constraints, in both sets, solves the original problem. This is trivial, as the constraints are simply a transcription of the matrix equation $W X=Y$ with non-negativity constraints on the matrix elements and an additional constraint on the column norms of $W$. The second thing to check is that the constraints in each group, $A$ and $B$ separately, are sufficiently local that it is easy to satisfy them. This seems plausible, but the true test of this comes when we write algorithms for the two constraint projections.

\subsection{Projections}\label{sec:NMFproj}

Projections minimize the Euclidean distance to the constraint set. When the variables come in different varieties --- weights, code vectors --- one should consider applying different distance weighting to the different types of variables. A sufficiently general metric for our NMF scheme would be
\begin{equation}\label{metric}
d^2(x,w)=\sum_{{k\in K}\atop{i\to j\in E}} (x'[k,i\to j]-x[k,i\to j])^2+g^2\, (w'[k,i\to j]-w[k,i\to j])^2,
\end{equation}
where $g^2$ controls the relative compliance of the two types of variables when satisfying constraints. But the rescalings $x\to \sqrt{g}\,x$, $w\to w/\sqrt{g}$ restore the isotropic metric without changing any of the constraints except the value of the norm $\Omega$. The hyperparameter $\Omega$ therefore provides all the advantage one might gain from a parametrized metric and we are free to set $g=1$. Since increasing $g$ was equivalent to increasing $\Omega$, we should set a large value of $\Omega$ when we want the weights to be less compliant than the code vectors.

Constraints \eqref{NMFA1} and also the combination of \eqref{NMFA2} and \eqref{NMFA3} are a common form of \textit{compound constraint}, where the projection seeks a consensus value that additionally satisfies a side constraint. Projections to this type of constraint make use of the following lemma, whose most general form we will need in the later sections:
\begin{lem}\label{compoundconstraint}
Consider variables $\{z_i\in \mathbb{R}^m: i\in I\}$ and $v\in \mathbb{R}^n$  subject to the constraints
\begin{align*}
\forall\, i\in I :&\quad z_i=\overline{z}\\
&\quad(\overline{z},v)\in S,
\end{align*}
where $S\subset \mathbb{R}^{m+n}$ is a set that specifies a side constraint. The projection to this constraint, minimizing
\[
d^2(z_i: i\in I,v)=\sum_{i\in I}\|z'_i-z_i\|^2+g^2\,\|v'-v\|^2,
\]
is given by $z_i\to \overline{z}'$: $i\in I$, $v\to v'$, where
\[
(\overline{z}',v')=P_S\left(\overline{z},v\right),
\]
\[
\overline{z}=\frac{1}{|I|}\sum_{i\in I}z_i,
\]
and $P_S$ is the projection to $S$ minimizing
\[
d^2(\overline{z},v)=|I|\,\|\overline{z}'-\overline{z}\|^2+g^2\,\|v'-v\|^2.
\]
\end{lem}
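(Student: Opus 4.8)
The plan is to reduce the joint projection to the weighted side-projection $P_S$ by exploiting the rigidity that the consensus constraint imposes on the feasible set. First I would observe that any feasible point is completely parametrized by a pair $(\overline{z}', v') \in S$: the consensus condition forces every $z_i$ to equal the common value $\overline{z}'$, so the feasible set is the image of $S$ under the map $(\overline{z}', v') \mapsto (\overline{z}', \ldots, \overline{z}', v')$. Substituting this parametrization into the distance $d^2$ turns the projection into the single minimization
\[
\min_{(\overline{z}', v') \in S} \left[ \sum_{i \in I} \|\overline{z}' - z_i\|^2 + g^2 \|v' - v\|^2 \right].
\]

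The key step is to rewrite the first sum using the (fixed) mean $\overline{z} = \frac{1}{|I|} \sum_{i \in I} z_i$. Writing $\overline{z}' - z_i = (\overline{z}' - \overline{z}) + (\overline{z} - z_i)$ and expanding, the cross term is proportional to $\sum_{i \in I} (\overline{z} - z_i)$, which vanishes by the definition of $\overline{z}$. This is the usual parallel-axis (bias--variance) decomposition and yields
\[
\sum_{i \in I} \|\overline{z}' - z_i\|^2 = |I|\, \|\overline{z}' - \overline{z}\|^2 + \sum_{i \in I} \|\overline{z} - z_i\|^2.
\]
The residual sum $\sum_{i \in I} \|\overline{z} - z_i\|^2$ is independent of the optimization variables $(\overline{z}', v')$, so it merely shifts the objective by a constant and does not affect the minimizer.

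Dropping the constant, the minimization becomes exactly
\[
\min_{(\overline{z}', v') \in S} \left[ |I|\, \|\overline{z}' - \overline{z}\|^2 + g^2 \|v' - v\|^2 \right],
\]
which is the definition of the weighted projection $(\overline{z}', v') = P_S(\overline{z}, v)$ stated in the lemma. Since the minimizer assigns this common value $\overline{z}'$ to every $z_i$, the claimed form of the projection follows. I do not expect any serious obstacle: the only nontrivial ingredient is the parallel-axis identity, and the appearance of the weight $|I|$ on the $\overline{z}'$ term in the $P_S$ metric is a direct consequence of summing $|I|$ identical copies of $\|\overline{z}' - \cdots\|^2$. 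The one point worth stating carefully is that it is the mean $\overline{z}$ — and not the optimization target $\overline{z}'$ — that is the correct center for the decomposition, since only with this choice is the cross term annihilated.
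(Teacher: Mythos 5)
Your proof is correct and is exactly the "elementary exercise in completing the square" that the paper invokes: the parallel-axis decomposition about the mean $\overline{z}$ is that completion of the square, and it reduces the joint minimization to the weighted projection $P_S$ as claimed. No gaps.
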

\begin{proof}
The proof is an elementary exercise in completing the square.
\end{proof}
In plain terms the lemma states that, in the case of a set of variables subject to a constrained consensus constraint, the projection is performed in two stages. In the first stage a consensus value is obtained by a simple average. This is followed, when there is a side constraint, by projecting the average value to the side constraint taking care to weight its distance by the number of variables taking part in the consensus. When the side constraint involves no additional variables the weighting of the distance plays no role.

Thanks to the lemma, projecting to the $A$ constraint is very easy. To project to the non-negativity side constraint one simply sets to zero all the negative weights or code values. When the norm is also fixed in the side constraint, as in \eqref{NMFA3}, rescaling to the correct norm follows the zeroing of the negative elements. There is a slight complication in the case where the consensus weights are all negative, so that the all-zero vector after non-negativity projection cannot be rescaled. The correct projection in this case is to replace the least negative consensus weight by $\Omega$ and set the rest to zero. Detailed  proofs of these statements are given in \cite{bauschke2018projecting}.

The cost of the projection to the $A$ constraint is dominated by the computation of the consensus values of the variables, not the projections to the side constraints, and therefore scales as the product of the batch size and the number of edges in the network, or $|K||E|$.

The constraints in $B$ are independent for all $k\in K$, $j\in D$ and have the form
\begin{equation}\label{bilinear}
x\cdot w = y,
\end{equation}
where $y$ is fixed by the data. This \textit{bilinear constraint} is generalized, in the examples with deep networks to follow, in that $y$ also becomes a variable. Projections to this constraint (and generalization) are the core of our training algorithm. We are not aware of prior uses of this constraint and its projection. The mathematics supporting our description (below) of the projection is given in appendix 
\ref{sec:projappendix}.

The first step of the projection is the computation of two scalars:
\begin{equation}\label{pq}
p=x\cdot w,\qquad q=x\cdot x +w\cdot w.
\end{equation}
We will assume that $x\pm w=0$ never arises in the course of training, so that
\begin{equation*}
0<\|x\pm w\|^2=q\pm 2p,
\end{equation*}
implies that
\begin{equation}\label{q>2p}
q>2|p|
\end{equation}
always holds. The projection $(x,w)\to (x',w')$ is then unique and given in terms of the unique root $u_0\in(-1,1)$ of a rational equation $h_0(u)=0$ derived in appendix \ref{sec:projappendix}:
\begin{subequations}\label{x'w'}
\begin{align}
x'&=\frac{1}{1-u_0^2}(x+u_0 w)\\
w'&=\frac{1}{1-u_0^2}(w+u_0 x).
\end{align}
\end{subequations}
Fixing the precision of the root $u_0$, the cost of the projection is dominated by the arithmetic in \eqref{pq} and \eqref{x'w'} and scales as the lengths of the vectors $x$ and $w$. As a result, the projection to the $B$ constraint scales in the same way as the projection to the $A$ constraint, as $|K||E|$.

\subsection{Training}

Apart from the absence of a loss function, training networks with RRR is in practice not that different from training with gradient methods. In the case of NMF, the network architecture, Figure \ref{fig:fig3}(a), is fixed by the problem. Sparsity of the features or mixtures could be introduced through a modification of the $A$ constraint and corresponding projection (section \ref{sec:NMFproj}). When the data being factorized is large, a choice must be made for the batch size $|K|$. As a general rule, training is improved with larger batches, and if memory is not a factor, it is usually best not to break up the data at all.

In the RRR applications the author is most familiar with, where the solution or near-solution is unique up to symmetries, RRR is remarkably insensitive to initialization. Variables are normally given random initial values to avoid bias and also to build confidence in a solution's uniqueness when it is obtained multiple times. In the over-parameterized setting of neural network models, where solutions normally are far from unique, initialization might turn out to be important. In any case, we next describe an initialization procedure designed to work with the warm starts that must be used when the data is processed in batches.

For NMF, the only randomness we use in the initialization is on the consensus weights $w_A$ that appear in the $A$ constraint. These we sample from the uniform distribution on $[0,1]$, followed by a rescaling to satisfy \eqref{NMFA3}. Recall that the $w$'s (not $w_A$) are the actual RRR variables, and we initialize them as
\begin{equation}\label{winit}
\forall\,k\in K, i\in C, j\in D:\quad w[k,i\to j]\leftarrow w_A[i\to j].
\end{equation}
By construction, these $w$'s satisfy the $A$ constraint. The randomly generated consensus weights $w_A$ are also used to compute the pseudo-inverse \eqref{Winv} used by the encoder \eqref{NMFencode}. With this encoder we obtain the non-negative consensus code vector $x_A$ for each data vector $y$ in our batch. These code vectors then initialize the $x$ variables:
\begin{equation}\label{xinit}
\forall\,k\in K, i\in C, j\in D:\quad x[k,i\to j]\leftarrow x_A[k,i].
\end{equation}
All of our initial RRR variables ($w$ and $x$) thus satisfy the $A$ constraint.

When data is processed in batches we use random initialization only for the first batch. For subsequent batches we use the final (best optimized) $w_A$ from the current batch, and the corresponding pseudo-inverse encoder \eqref{NMFencode} in the initializations \eqref{winit} and \eqref{xinit}. The RRR search variables $w$ and $x$ thereby inherit information from the previous batches by exactly satisfying the $A$ constraints derived from the current-best weights. This method of batch initialization generalizes to our other applications/networks and will be called ``warm start".

In the NMF application there are two ``errors" of interest. Closest to the operation of RRR is the speed of the flow $v$ (section \ref{sec:intro}), or the constrain incompatibility, which vanishes at a solution fixed point. Our normalization convention for this error is
\begin{equation}\label{RRRerr}
(\verb+RRR_err+)^2 =\frac{1}{|K|}\sum_{k\in K}\;\sum_{i\in C,\, j\in D}(w_A[i\to j]-w_B[k,i\to j])^2+(x_A[k,i]-x_B[k,i\to j])^2.
\end{equation}
Closer to the NMF application is the reconstruction error:
\begin{equation}\label{reconerr}
(\verb+recon_err+)^2 =\frac{1}{|K||D|}\sum_{k\in K}\;\sum_{j\in D}\left(y[k,j]-\sum_{i\in C}x_A[k,i]\,w_A[i\to j]\right)^2.
\end{equation}
This identifies $w_A$ and $x_A$ as the actual NMF solution (or near-solution). In the batched setting, also called ``online NMF", the solution code vectors $x_A[k,i]$ for the entire data set are defined by the encoding \eqref{NMFencode} that uses the pseudo-inverse \eqref{Winv} derived from the current $w_A$, and \eqref{reconerr} is evaluated with $K$ as the entire data set. Note that \verb+recon_err+ corresponds to a root-mean-square signal error\footnote{This deviates from the more common ``mean-square error" which is not as directly interpretable.}. The time-series of \verb+RRR_err+ for the RRR iterates is a useful diagnostic for the RRR solution process. Normally we report just the final value of \verb+recon_err+, or its value after each pass through the entire data, also called an ``epoch".

RRR iterations are terminated either by imposing a fixed cutoff, \verb+RRR_iter+, or as soon as \verb+RRR_err+ falls below a small value, \verb+tol+. In batch mode it often makes sense to use both. A small error target \verb+tol+ for the early batches might unfairly emphasize an unrepresentative sampling of the data (overfitting), especially if it requires many iterations to achieve that error. This is avoided by setting a modest \verb+RRR_iter+. Later, after all the data has been seen (some number of epochs) and \verb+RRR_err+ has dropped to where far fewer iterations are needed to reach the target \verb+tol+, a cap on the number of iterations will grow increasingly unnecessary.

To quantify the work performed in finding a factorization, classification, etc., we use an energy-based, parallelization-independent measure. The scaling of the operation count per iteration $|K||E|$, discussed in section \ref{sec:NMFproj}, will continue to hold in the other applications. We therefore define work, or \textit{giga-weight-multiplies}, by
\begin{equation}
\verb+GWMs+ = 10^{-9}\times\verb+iter_count+\times |K||E|,
\end{equation}
where \verb+iter_count+ is the net number of RRR iterations performed over all batches. For our serial C implementations without GPU of the various training algorithms reported in this study, the wall-clock time in seconds is approximately $100\times \verb+GWMs+$.

For NMF there are only two hyperparameters: the step size $\beta$ and the weight norm $\Omega$. In most of our experiments we use the Douglas-Rachford/ADMM step size $\beta=1$ (appendix \ref{sec:RRRappendix}). Local convergence, in the convex case, holds even for this ``large" step size, and from that perspective nothing is gained by making smaller steps. However, very challenging non-convex constraint satisfaction problems, e.g. bit retrieval \citep{elser2018complexity}, are helped with $\beta<1$ and we will take advantage of this in one of the experiments. Our method for selecting $\Omega$ is strictly empirical. Performance degrades both for very small and very large $\Omega$, consistent with the idea that neither factor, $W$ or $X$, should dominate the factorization.

\subsection{Experiments}

The C programs and data sets used in our experiments are publicly available\footnotemark[\value{footnote}]. For the NMF experiments we used the program \texttt{RRRnmf.c}. In our comparisons, here and in the next sections, we use various programs from the package \texttt{scikit-learn}. This package provides coordinate-descent (CD) and multiplicative-update (MU) solvers for NMF, and comes with various options for initializing the factors. We did not make use of the regularization features in these solvers to keep the comparison fair.

\footnotetext{\texttt{github.com/veitelser/LWL}}

\subsubsection{Linear Euclidean Distance Matrices}\label{sec:ledm}

The linear Euclidean distance matrices (LEDMs) are a standard benchmark for exact NMF. Instances are specified by the $m\times m$ matrices
\begin{equation}
Y_{i k}(m)=\left(\frac{i-k}{m-1}\right)^2,\qquad i\in\{1,\ldots,m\},\; k\in\{1,\ldots,m\}.
\end{equation}
These have ordinary rank 3 (for $m\ge 3$) and non-negative rank $r_+$ that grows logarithmically with $m$ \citep{hrubevs2012nonnegative}. The first non-trivial case for NMF, where $r_+(Y(m))<m$, is $m=6$ for which $r_+=5$. Currently the true non-negative rank is known only up to $m=16$. The CD  algorithm is the better solver for these instances but manages to solve the easiest $m=6$ instance in only 1.6\% of attempts from random starts. \cite{vandaele2016heuristics} report getting solution rates up to 80\% on this instance with the hierarchical-alternating-least-squares algorithm of \cite{cichocki2007hierarchical}. However, even this algorithm was not able to solve LEDM instances beyond $m=8$. The heuristic algorithms that have demonstrated a positive solution rate for instances up to $m=16$ use significantly more randomness than just sampling random starting points, for example, by repeatedly randomizing rank-1 terms followed by local convex minimizations. The state-of-the-art is reviewed by \cite{vandaele2016heuristics}.

\begin{table}[t]
\begin{center}
\begin{tabular}{|rr||rrrrr|rr|rr|}
\hline
& & \multicolumn{5}{c|}{RRR} & \multicolumn{2}{c|}{CD}& \multicolumn{2}{c|}{heuristic$^1$}\\
$m$ & $r_+$ & $\beta$ & $\Omega$ & rate & sec & \verb+GWMs+ & rate & sec & rate & sec \\
\hline
6 & 5 & 0.3 & 0.6 & 100\% & 2 & 0.02 & 1.6\% & 100 &100\% & 19 \\
8 & 6 & 0.3 & 0.7 & 100\% & 59 & 0.61 & $<1\%$ & --- & 99\% & 64 \\
12 & 7 & 0.3 & 0.8 & 100\% & 567 & 6.27 & $<1\%$ & --- & 69\% & 37 \\
16 & 8 & 0.3 & 0.9 & 100\% & 3677 & 41.52 & $<1\%$ & --- & 48\% & 104 \\
\hline
\multicolumn{7}{c}{} & \multicolumn{2}{c}{} & \multicolumn{2}{c}{\footnotesize 1) Vandaele et al.}\\
\end{tabular}
\end{center}
\caption{NMF results for RRR on the LEDM instances compared with coordinate descent (CD) and the leading heuristic from \cite{vandaele2016heuristics}. The RRR results are based on 20 runs for each instance.}
\label{tab:tab1}
\end{table}

It appears the RRR algorithm can factor the LEDM instances up to $m=16$ with a 100\% success rate, that is, reliably from a single random start. This is for the hyperparameter settings given in Table \ref{tab:tab1}. RRR relies on randomness too, but not in the usual sense where randomness is injected by hand at some fixed rate, but through the dynamics of the RRR flow which is chaotic for the constraint sets of this particular application. The chaos is reflected in the behavior of \verb+RRR_err+, shown in Figure \ref{fig:fig4} for three runs of the algorithm on the $m=6$ instance. In each run a chaotic searching period is followed by the convergent behavior that applies when, locally, only convex parts of the constraint sets $A$ and $B$ are active. The chaotic/searching period dominates the solution process in hard instances. One might reasonably claim that these transitions from searching to convergent behavior are the closest any neural network has yet come to experiencing an ``aha moment".

The comparison in Table \ref{tab:tab1} shows that what the heuristic algorithm lacks in reliability it easily makes up for in terms of speed. Still, it is interesting that RRR, a deterministic algorithm, is able to solve these hard problems. 

\begin{figure}[t!]
\begin{center}
\includegraphics[width=5.in]{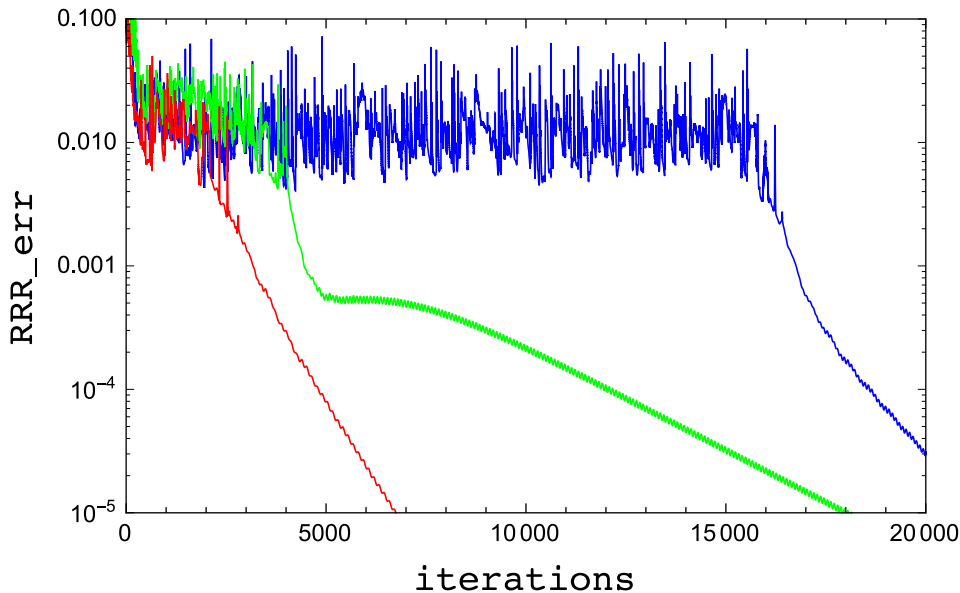}
\end{center}
\caption{Time series of \vtt{RRR_err}, the distance between constraint sets $A$ and $B$, for three runs of the algorithm on the $m=6$ LEDM instance. As \vtt{RRR_err} is analogous to loss, we see that RRR has no trouble negotiating many local minima in the course of finding solutions.}
\label{fig:fig4}
\end{figure}

\subsubsection{Synthetic letter montages}

In this application we demonstrate the online mode of NMF with the RRR algorithm, where data is processed in batches. As a technique for ``learning the parts" of images, NMF has been surpassed by more sophisticated machine learning methods. However, in order to test the RRR algorithm on a large data set, we constructed a set of 2000 images by hand where the relatively restrictive definition of ``parts" implicitly assumed by NMF applies. A sample of 16 such $40\times 40$ pixel images is shown in the left panel of Figure \ref{fig:fig5}. Each image is a montage of letters selected at random and with  two types of font: \text{\sffamily  w, x, y, z} (plain) or \text{\sffamily\itshape  w, x, y, z} (slant). Since the letters are always placed in one of four positions in the image, and there are four kinds of letters aside from the font variation, it should be possible to learn an approximate rank-16 factorization of these images. The two-layer network for this task has 16 code nodes and $40^2$ data nodes.

\begin{figure}[t!]
\begin{center}
\includegraphics[width=5.in]{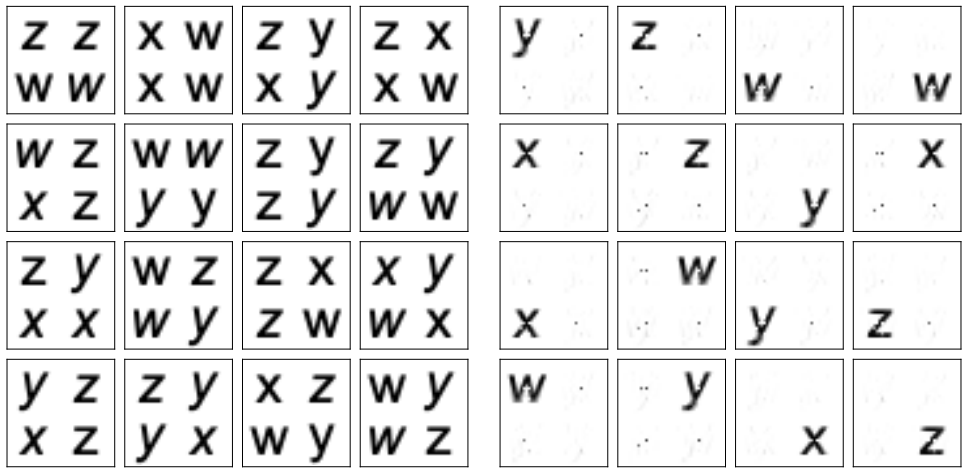}
\end{center}
\caption{\textit{Left:} Sixteen sample images from the letter-montage data set. \textit{Right:} The 16 features obtained in one run of RRR. Letters in the data occur with two fonts (plain and slant); in a successful NMF all the recovered features/letters are weakly slanted.}
\label{fig:fig5}
\end{figure}

\begin{figure}[h!]
\begin{center}
\includegraphics[width=4.5in]{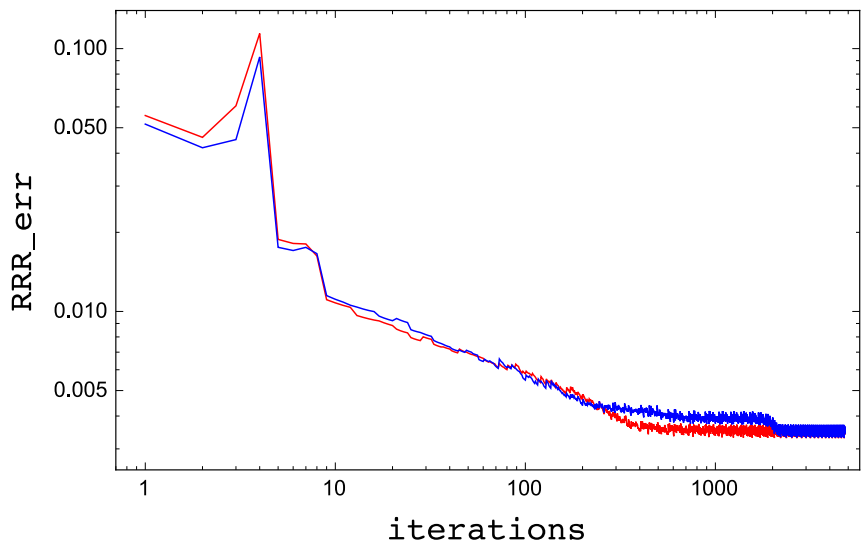}
\end{center}
\caption{Time series of \vtt{RRR_err} in two runs of online NMF for the letter-montage data set. RRR iterations are applied in blocks of 4 to batches of size 50, and there are 40 batches in the data set. In each run of 30 epochs there are altogether $4\times 40\times 30$ iterations.}
\label{fig:fig6}
\end{figure}

We processed the data in 40 batches of size 50 for 30 epochs and set a rather small limit of only 4 RRR iterations per batch. The results were not very sensitive to hyperparameter settings; we chose $\beta=1$ and $\Omega=2$. Representative time series of the RRR flow speed (\verb+RRR_err+) are shown in Figure \ref{fig:fig6} and by their near monotonic behavior indicate that these instances of NMF are easier than the LEDMs. 
However, as seen in one of the runs in Figure \ref{fig:fig6}, sometimes many iterations are needed before the algorithm manages to find the last detail that gives the minimum error. Figure \ref{fig:fig7} compares histograms of the final reconstruction error in 100 runs with the CD algorithm. Factorizations such as the one in the right panel of Figure \ref{fig:fig5} have $\verb+recon_err+\approx 0.089$. In the less successful reconstructions, for both RRR and CD, usually just one of the 16 letter/position combinations is missing and replaced by a duplicate of one of the other 15, but in the contrasting font. As in the LEDM instances the CD algorithm gave better results than MU.

\begin{figure}[t!]
\begin{center}
\includegraphics[width=6.5in]{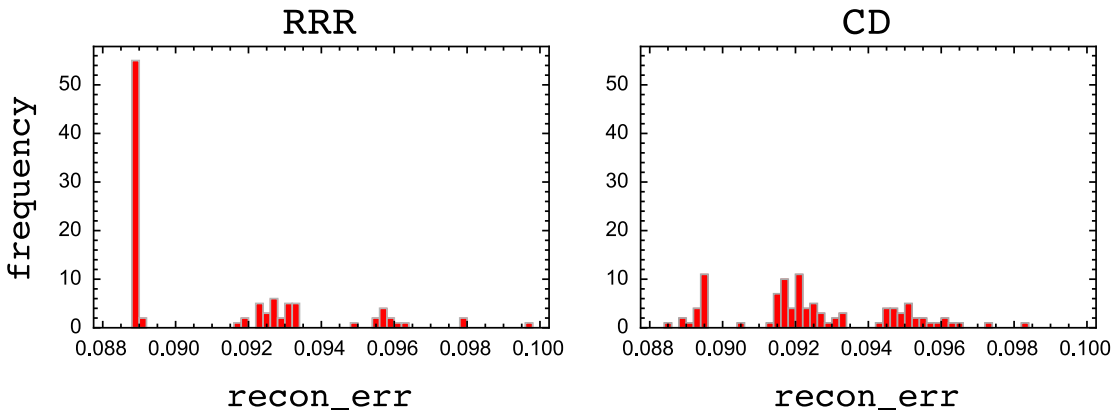}
\end{center}
\caption{Distribution of final reconstruction errors in 100 runs of RRR and coordinate descent (CD) on the letter-montage data set.}
\label{fig:fig7}
\end{figure}

\newpage 

\section{Classification}\label{sec:class}

Our loss-free or constraint-based method of training classifiers shares many elements with the method we used for NMF. The variables and constraints are defined without reference to layers apart from a layer of data nodes that receive input and a layer of class nodes at the output end of the network. As in NMF we have a weight variable $w[k,i\to j]$ for each data item $k\in K$ in the batch and edge $i\to j\in E$ of the network. One of the constraints will force the weights to reach a consensus and be independent of $k$, but now we do not additionally impose non-negativity on the consensus value. Also as in NMF there is a variable $x[k,i\to j]$ assigned to each data item and network edge. These carry the post-activation values in the network. Consensus applies to the values on edges $i\to j$ with the same origin $i$; there is also no non-negativity side constraint. Unlike NMF, the data constraint is imposed in the input layer, on the consensus values of the $x$ variables there.

A new feature in the classifier is a pre-activation node variable $y[k,j]$ for each data item $k\in K$ and node $j\in H\cup C$ (hidden and class nodes). These participate in two kinds of constraint. When $j\in H$, there is an activation-function constraint between $y[k,j]$ and the consensus value of $x$ on the same node, $x[k,j]$. This constraint also involves a bias variable $b[k,j]$, which (unlike the $x$ and $y$ variables) is constrained to be independent of $k$. We can think of the bias variables as providing a limited degree of node-specific customization to the activation functions. With this perspective it will seem less strange that the weight and bias parameters appear on different sides of the $A$/$B$ constraint splitting.

On nodes $j\in C$ the activation-function constraint on $y[k,j]$ is replaced by a class-encoding constraint. This can take many forms, analogous to the options one has for defining a loss function on the values of the class nodes. We chose the class encoding where the $(y-b)$'s on all the incorrect class nodes are constrained to be negative while $y-b$ on the correct class node is required to be greater than a positive margin parameter $\Delta$.

Recall that for NMF we motivated the use of a norm constraint, on the weights incident to each code node, in that this removed a source of solution non-uniqueness while also indirectly providing control over the relative weighting of the projection distances of the $w$ and $x$ variables. Much of this continues to be relevant for the classifier, especially when we use particular activation functions. The ReLU function has no intrinsic scale, and the step activation in Figure \ref{fig:fig2} cares only about the size of the gap it expects in its input values. In the ReLU case, a uniform rescaling of all weights in a layered network simply rescales the values at the class nodes and is equivalent to a rescaling of the margin parameter $\Delta$ for defining class boundaries. For the step activation a uniform weight rescaling is equivalent to a rescaling of the gap.

As in NMF we take advantage of the rescaling freedom of the weights to exercise control over the  metric \eqref{metric} that determines the projections. However, in order to maintain a fixed scale between the pre- and post-activation neuron values, the bilinear constraint \eqref{bilinear} will be replaced in multilayer networks by
\begin{equation*}
x\cdot w = \Omega\, y,
\end{equation*}
where $\Omega=\|w\|$ is now a constraint on the weights on the input-side of the neuron (for NMF we constrained weights on the output-side).

\subsection{Constraints}\label{sec:classcon}

Below is a summary of all the constraints in a classifier, as discussed above, partitioned into sets $A$ and $B$. The data vector for item $k$ is denoted $d[k,i]$ and the corresponding class node is $c[k]\in C$.
\begin{subequations}\label{classconstraints}
\begin{align}
\mbox{$A$ constraints}\qquad\qquad\qquad&\nonumber\\
\forall\,k\in K, i\to j\in E&:\quad x[k,i\to j]=x_A[k,i]\label{classA1}\\
\forall\,k\in K, i\in D&:\quad x_A[k,i]=d[k,i]\label{classA2}\\[10pt]
\forall\,k\in K, i\in H&:\quad x_A[k,i]=f(y[k,i]-b[k,i])\label{classA3}\\[10pt]
\forall\,k\in K, i=c[k]\in C&:\quad y[k,i]-b[k,i]\ge \Delta\label{classA4}\\
\forall\,k\in K, i\in C\setminus c[k]&:\quad y[k,i]-b[k,i]\le 0\label{classA5}\\[10pt]
\forall\,k\in K, i\to j\in E&:\quad w[k,i\to j]=w_A[i\to j]\label{classA6}\\
\forall\,j\in H\cup C&:\quad \sum_i w_A^2[i\to j]=\Omega^2\label{classA7}\\
\mbox{$B$ constraints}\qquad\qquad\qquad&\nonumber\\
\forall\,k\in K,j\in H\cup C&:\quad \sum_i x[k,i\to j]\,w[k,i\to j]=\Omega\,y[k,j]\label{classB1}\\
\forall\,k\in K,i\in H\cup C&:\quad b[k,i]=b_B[i].\label{classB2}
\end{align}
\end{subequations}

\subsection{Projections}

Many of the constraints in \eqref{classconstraints} appeared in NMF and the same projections apply. 
On the other hand, the participation of unlike variable types in some of the constraints is even more pronounced than it was in NMF. In the latter we had two types, the factors $x$ and $w$ of the factorization, but at least they came in pairs, on every edge of the network. By contrast, in our classifier some variables ($x$ and $w$) are associated with edges while others ($y$ and $b$) are associated with nodes. As we will see, the distance that defines projections must be chosen with care under these circumstances.

We will use the following distance for the variables in our classifier:
\begin{equation}\label{classDist}
\begin{aligned}
d^2(x,w,y,b)&=\sum_{{k\in K}\atop{i\to j\in E}}\left((x'[k,i\to j]-x[k,i\to j])^2+(w'[k,i\to j]-w[k,i\to j])^2\right)\\
&+\sum_{{k\in K}\atop{i\in H\cup C}}g^2(i)\left((y'[k, i]-y[k,i])^2+(b'[k, i]-b[k,i])^2\right).
\end{aligned}
\end{equation}
For simplicity we do not weight the $y$'s and $b$'s differently, and focus on the potentially more significant role of the factor $g^2(i)$ that controls the relative weight of node and edge variables. To motivate our choice for this factor we consider the activation function constraint \eqref{classA3}.

Constraint \eqref{classA3} is the side constraint that applies to the consensus values $x_A[k,i]$ when $i\in H$. Since this constraint is local in $k$, we suppress this identifier in the following. By lemma \ref{compoundconstraint}, when computing the projection $(x_A[i],y[i],b[i])\to (x'_A[i],y'[i],b'[i])$ to the activation-function side constraint we penalize changes in $x_A[i]$ by the cardinality of the $x$ variables of which $x_A[i]$ is the consensus value. By \eqref{classA1} this is the out-degree of node $i$. The projection therefore minimizes
\begin{equation}\label{actdist}
\mathrm{outdeg}(i)(x'_A[i]-x_A[i])^2+g^2(i)\left((y'[i]-y[i])^2+(b'[i]-b[i])^2\right)
\end{equation}
subject to
\begin{equation}\label{actconstraint}
x'_A[i]=f(y'[i]-b'[i]).
\end{equation}

Our principle for setting the strengths $g^2(i)$ on the hidden nodes is that the inputs $y-b$ of the activation functions should not be enslaved to the outputs $x$, and vice versa. During training we want inconsistencies in the network to be resolved, in equal measure, upstream and downstream of each neuron. To promote this behavior we set
\begin{equation}\label{hiddeng}
g^2(i)=\mathrm{outdeg}(i),\quad i\in H.
\end{equation}
Only with this rule can we expect training to behave similarly on networks with widely varying architectures (out-degrees). As there are no architecture-dependent features of the kind just described for the class nodes and constraints \eqref{classA4} and \eqref{classA5}, we introduce a hyperparameter for them:
\begin{equation}\label{classg}
g^2(i)=\Upsilon,\quad i\in C.
\end{equation}

For constraint $A$ the only projections not encountered in NMF are those for the class encoding, \eqref{classA4} and \eqref{classA5}, and the activation function side constraint, \eqref{classA3}. For the former we leave $y$ and $b$ unchanged if the relevant inequality is satisfied, or change $y$ and $b$ equally and oppositely to produce the equality case. In the latter, the projection depends on the form of the activation function $f$, where some forms can be calculated efficiently even without a look-up table. The ReLU function, its locus being the union of two half-lines, is such a case and the function we use in most of our experiments. By our choice, an effectively isotropic distance applies to both of the projections just discussed. 

In the $B$ constraints there is a slight modification of the bilinear constraint \eqref{classB1}, by the pre-activation $y$'s that was not present in NMF. The changes to the projection computation are minor and given in appendix \ref{sec:projappendix}. Note that only this constraint, for $j\in C$, has a projection that depends on the $\Upsilon$ hyperparameter.

As in NMF, the operation count for projecting to either the $A$ or $B$ constraints, dominated by \eqref{classA1}, \eqref{classA6} and \eqref{classB1}, scales as $|K| |E|$.

\subsection{Interventions for compromised data}\label{sec:compromised}

A natural objection to the use of hard constraints in real-world applications is the inevitability of compromised data. The labels on otherwise good data vectors might be wrong, or the data vectors themselves might be so severely corrupted their value for training is questionable. In this section we address this concern with simple replacements of constraints \eqref{classA4} and \eqref{classA5} and the corresponding projections. The hyperparameter associated with these replacements is a positive integer EE called the \textit{eccentric exemption}. This is a bound on the number of data in the training batch that may be exempted from the constraints. When data quality is good and the network has sufficient capacity for the classification at hand, it may turn out that fewer than EE data (or none) are exempted by the training algorithm.

\subsubsection{Corrupted vectors and possibly wrong labels}\label{sec:corruptdata}

Highly non-representative data vectors, say images of digits handwritten by only a very small fraction of the population, are by nature poor models for generalization. Data vectors of good quality but bearing wrong labels also bring no class information, and the data set would be improved by eliminating them. Both cases can be dealt with by allowing the training algorithm to ignore up to EE items of data. For example, if $\mbox{EE}=20$, then the training algorithm needs to satisfy the constraints $A$ and $B$ on only a subset $\widetilde{K}\subset K$ of cardinality $|\widetilde{K}|=|K|-20$. To implement this relaxation it suffices to eliminate EE elements of $K$ only in the class constraints, \eqref{classA4} and \eqref{classA5}, since all the other constraints associated with these data are automatically satisfied by a feed-forward pass of the (possibly corrupted) data vector through the network using the consensus weights and biases determined by the non-exempted data.

Summarizing, for this case of compromised data we replace \eqref{classA4} and \eqref{classA5} by
\begin{subequations}\label{baddata}
\begin{align}
\forall\,k\in \widetilde{K}\subset K, \;|\widetilde{K}|=|K|-\mbox{EE}&\nonumber\\
 i=c[k]\in C&:\quad y[k,i]-b[k,i]\ge \Delta\label{classA4bad}\\
i\in C\setminus c[k]&:\quad y[k,i]-b[k,i]\le 0.\label{classA5bad}
\end{align}
\end{subequations}
Projecting to this constraint requires more work, because the projection must discover the distance-minimizing subset $\widetilde{K}$. However, the additional work is modest and easy to implement. One starts by projecting, provisionally, to  the class constraints \eqref{classA4} and \eqref{classA5} for all $k\in K$ and records the net projection distance of the corresponding variables ($y$'s and $b$'s on the class nodes) for each $k$. The distance-minimizing subset $\widetilde{K}$ is obtained by keeping only those $k$, having cardinality $|K|-\mbox{EE}$, whose projection distances are smallest. The actual projection is applied only to the variables with these $k$, while those exempted are left unchanged.

\subsubsection{Wrong labels only}\label{sec:wronglabels}

We face a situation intermediate to the two considered so far when the data vectors are good and only some of the labels are wrong. In this case the class constraints \eqref{classA4} and \eqref{classA5} should be applied to all $k\in K$, but with the modification that for an exempted subset of data the class node might be different from what is specified by the data's label. We again let EE denote the number of exempted data and replace the class constraints by
\begin{subequations}\label{badlabel}
\begin{align}
\forall\,k\in K&\nonumber\\
& i=c^*[k]\in C:&& y[k,i]-b[k,i]\ge \Delta\\
&i\in C\setminus c^*[k]:&& y[k,i]-b[k,i]\le 0\\[10pt]
\forall\,k\in\widetilde{K}&\subset K, \;|\widetilde{K}|=|K|-\mbox{EE}:&& c^*[k]=c[k].
\end{align}
\end{subequations}
Here $c^*[k]\in C$ is the label selected in training; only the non-exempted subset $\widetilde{K}$ is required to match the label $c[k]$ of the data.

In addition to safeguarding generalization against wrong labels, by relaxing the class constraint in this way we have a method that in principle can fix bad labels. Classifiers that possess this feature, or \textit{relabeling classifiers}, are key to a new type of generative model described in section \ref{sec:replearn}.

Projecting to constraint \eqref{badlabel} is similar to the projection to \eqref{baddata}. The quantity used to determine the subset $\widetilde{K}$ is the excess projection distance, that is, the net projection distance of the $y$'s and $b$'s on the class nodes when $c^*[k]$ is forced to equal $c[k]$ rather than be allowed to be any of the class nodes. If up to EE data in the training batch have positive excess distance, then the training algorithm may ignore all their labels and use instead the label that gives the smaller distance. The distance minimizing projection, in general, sorts the excess projection distances and exempts those EE data which have the largest excess distances.

\subsection{Training}

As in NMF, training a classifier starts with initialization (at the outset and between batches), running iterations of RRR for the projections described above, and monitoring suitable metrics to assess progress. We use the same initialization strategy as in NMF, that minimizes randomness by exactly satisfying many easy-to-satisfy constraints. The weight variables are initialized to satisfy the consensus and norm constraints, \eqref{classA6} and \eqref{classA7}, where the consensus values are sampled from a uniform distribution and then normalized. All initial bias variables are set at zero (and therefore satisfy their consensus constraint). Initializing the $x$'s and $y$'s is done exactly as in the usual forward pass. For each data item the data vector is used as the consensus value $x_A$'s in the data layer. The consensus weights/biases and activation function are then used to propagate $y$'s and consensus $x_A$'s through the network, and all $x$'s are set to their consensus values. Upon completion of this initialization for all items in the training batch, all constraints are satisfied except the class constraints \eqref{classA4} and \eqref{classA5}, or their alternatives, \eqref{baddata} or \eqref{badlabel}, when we wish to accommodate compromised data.

When starting another training batch we use the same initialization just described except that for the consensus weights and biases we use the final consensus weights ($w_A$) and biases ($b_B$) of the previous batch.

The $A$-$B$ constraint discrepancy, \vtt{RRR_err}, is the same as the distance expression \eqref{classDist} with primed/unprimed quantities replaced by their $A$ and $B$ projection counterparts and averaged over items in the training batch (analogous to how it was defined in \eqref{RRRerr} for NMF). We note that after only few iterations it may happen that the final \vtt{RRR_err} may exceed its initial value. This is because initially all of the discrepancy is concentrated in the class constraints \eqref{classA4} and \eqref{classA5} and may increase, in aggregate, when allowed to redistribute over all the constraints.

As in NMF we control the amount of work by specifying a cutoff \verb+RRR_iter+ in the number of RRR iterations per batch and a value \verb+tol+ for \verb+RRR_err+ below which further iterations are deemed unnecessary.
 
There are three classification errors of interest. All are defined in the usual way, as the fraction of wrong classifications over particular data sets. Classifications are computed by a feed-forward of the data vector using the consensus weights/biases (at test time) and deciding class by the class node having the maximum value of $y-b$ (only one of which will exceed the margin $\Delta$ when all constraints are satisfied). All three classification errors are reported after every epoch of training. Two of them, \verb+train_err+ and \verb+test_err+, are computed for the respective data sets in their entirety at the end of the epoch, while \verb+batch_err+ is computed upon completion of each training batch and averaged over batches in the epoch. As is common practice in gradient based methods, the order of the training data is randomly shuffled before each epoch of training.

\subsection{Experiments}

For the experiments in this section we used the programs\footnotemark[\value{footnote}] \texttt{RRRclass.c} for simple classification and \verb+RRRclass_x.c+ when some number EE of (eccentric) data are exempted because of poor quality data vectors. For comparisons, in the non-exempted case, we used \texttt{scikit-learn}'s \texttt{MLPClassifier} function. To keep the models being trained the same, only layered, fully-connected networks without weight regularization were used, and the activation function was always ReLU. For the types of data studied, the simple SGD optimizer in \texttt{MLPClassifier} outperformed the others. That left only the batch size and initial learning rate ($\eta_\mathrm{init}$) as SGD hyperparameters we had to set.

\footnotetext{\texttt{github.com/veitelser/LWL}}

\subsubsection{Synthetic Boolean data}\label{sec:majgate}

\begin{figure}[t!]
\begin{center}
\includegraphics[width=3.5in]{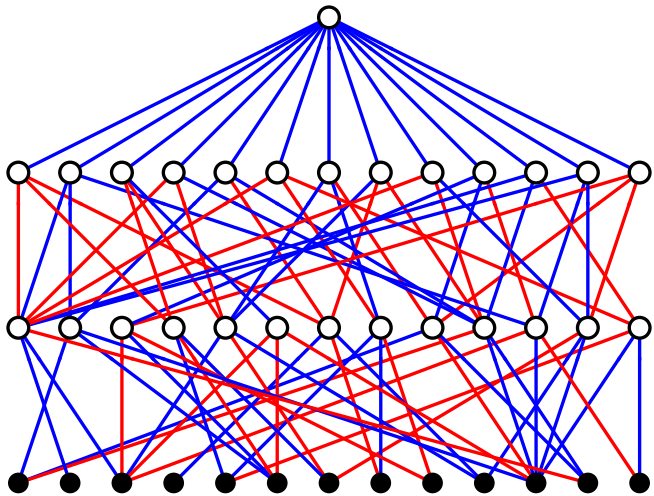}
\end{center}
\caption{One of the majority gate circuits (depth 3) used to generate data for the first classification experiment. The truth value of the circuit is computed by majority gates (white nodes) at the top node and two hidden layers of 13 nodes from 13 Boolean inputs (black nodes). Edge colors correspond to the absence/presence of a \textsc{Not} before input to the majority gate.}
\label{fig:fig8}
\end{figure}

Real-world classification conflates two stages of generalization that we might want to study independently. The first stage aims to learn what kinds of data vectors the network should expect to see, while the second stage attempts to impart structure to those ``typical" data vectors, structure that is consonant with the data labels. A Boolean function on $m$ arguments is a nice way of generating synthetic data that eliminates the first stage of generalization, because all $2^m$ Boolean vectors are valid data. We use \textsc{Not} and odd-input majority gates (rather than \textsc{And} and \textsc{Or}) to build our circuits, as this automatically ensures the two classes --- defined by the truth value after the final gate --- have equal cardinality. The difficulty of generalization is controlled by the depth $n$ of the circuit. Figure \ref{fig:fig8} shows a circuit on $m=13$ variables of depth $n=3$. In all our circuits the hidden layer majority gates take input from three randomly selected nodes in the layer below and \textsc{Not} gates are assigned randomly to edges.

In our experiments we fixed $m=13$ and randomly partitioned the full set of Boolean inputs into training and test sets of equal size, 4096. While the depth $n=2$ data was relatively easy for both RRR and SGD, the higher depth data was a challenge to learn. We used ReLU activation and based the network architecture on an identity for simulating majority gates with ReLU. Consider a majority gate receiving an odd number $p$ of inputs with negations corresponding to the $-1$ elements of the weight vector $w=(w_1,\ldots, w_p)\in\{1,-1\}^{\times p}$. With a bias set as
\begin{equation}
b(w)=({\textstyle \sum_{i=1}^p w_i}-1)/2,
\end{equation}
then
\begin{equation}\label{majsim}
x'=\mathrm{relu}\left(w\cdot x-b(w)\right)-\mathrm{relu}\left(w\cdot x-b(w)-1\right)
\end{equation}
simulates the majority gate when \textsc{F/T} are mapped, respectively, to $0/1$ in the inputs $x$ and output $x'$. To be able to simulate the logic of the hidden layers it therefore suffices to have twice the number of ReLUs as majority gates. To perform the class-defining function of the final (top) majority gate we do not need another ReLU there, since by sending oppositely signed signals to the two output nodes the class may be correctly encoded. Thus with architecture $13\to 26\to 26\to 2$ a ReLU network can in principle exactly represent the Boolean function in Figure \ref{fig:fig8}.

Since the ReLU function is scale invariant, by appropriately rescaling the biases $b(w)$ and $b(w)+1$, the conclusion about the network architecture will continue to hold when the weights satisfy our normalization $\|w\|=\Omega$. Recalling that the inputs to our ReLUs is  $y=w\cdot x/\Omega$, we see that the output $x'$ in \eqref{majsim} will be diminished, after rescaling, by $\sqrt{1/p}$ relative to the inputs $x$. The net effect of these rescalings in a network with $h=n-1$ hidden layers of $2m$ ReLU neurons, each with exactly three non-zero, equal magnitude input weights, is multiplicative. In particular, when the number of majority gates with value 1 in the penultimate layer of the circuit changes from $(m-1)/2$ to $(m+1)/2$, thereby changing the truth-value/class, the corresponding $y$ values at the two output nodes of the ReLU network change by
\begin{equation}\label{deltabound}
\frac{1}{\sqrt{2m}}\left(\frac{1}{\sqrt{3}}\right)^h.
\end{equation}
By setting the margin parameter $\Delta$ below this value we know that it is possible to exactly represent the corresponding Boolean function. However, because we do not impose (in training) the constraint that the hidden layer ReLUs have exactly three non-zero and equal magnitude input weights, we cannot rule out that RRR is able to succeed with a $\Delta$ greater than this bound. In fact, this is what we find.

Having to properly set the margin $\Delta$ may seem to put constraint-based classifiers at a disadvantage relative to loss-based (e.g. cross-entropy) classifiers. However, our RRR experiments show that results are not all that sensitive to this parameter, and a good setting can be found with few trials. It should also not be overlooked that most loss functions have parameters as well, such as the temperature in the cross-entropy function\footnote{When the weights in the last layer are unconstrained and not subject to regularization, then their scale subsumes the role of the temperature.}.

Since the space of hyperparameters and training protocols is large, our first experiment is focussed on the best use of resources. Specifically, we are interested in minimizing the work as measured by \verb+GWMs+ to achieve a given classification accuracy. Is it better, when using RRR, to do many iterations per batch, say consistently achieving $\verb+batch_err+=0$, and few epochs, or the other way around? As a representative case we used the depth $n=3$ data, batch size $|K|=128$, and the ``fast" time-step $\beta=1$. For the three hyperparameters that control the projections we chose $\Omega=2$, $\Upsilon=1$, and $\Delta=0.1$. Of these only $\Delta$ has a significant effect and we present those details later. In the meantime, we note that \eqref{deltabound} for $h=2$ gives the bound $\Delta<0.065$, so that by setting $\Delta=0.1$ RRR is being challenged to find a somewhat stronger class separation than promised by the majority gate simulation analysis.

Figure \ref{fig:fig9} compares \verb+train_err+ when 50, 100, or 200 RRR iterations are performed per batch, with the number of epochs decreasing as 2000, 1000, and 500 to keep the work constant. We see that $\verb+RRR_iter+=100$ is the most efficient, at least for batch size 128, and we fix this in the subsequent experiments. With this setting RRR has no trouble getting to $\verb+train_err+=0$, even though about 30 epochs are needed before $\verb+batch_err+=0$ is achieved (not shown). 

\begin{figure}[t!]
\begin{center}
\includegraphics[width=5.5in]{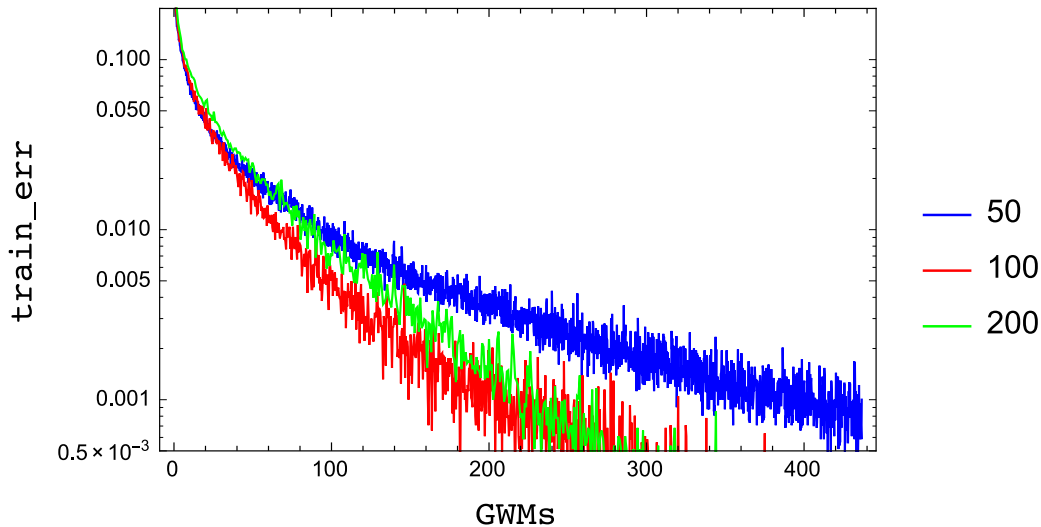}
\end{center}
\caption{Training error vs. work for the depth-3 majority-gate circuit data, compared for RRR with 50, 100 and 200 iterations per batch. Results are averages of 10 runs with random initial weights.}
\label{fig:fig9}
\end{figure}

Figure \ref{fig:fig10} shows how well the RRR trained networks generalize for the protocol just described, now for four values of the margin $\Delta$. The scatter of points combines the results of 10 runs and the 10 final epochs of training. We see that the training data can still be represented accurately when $\Delta$ is increased from 0.05 to 0.1, and that this improves generalization. However, further increases in $\Delta$ compromised generalization by an amount similar to the increase in training error.

\begin{figure}[t!]
\begin{center}
\includegraphics[width=5.5in]{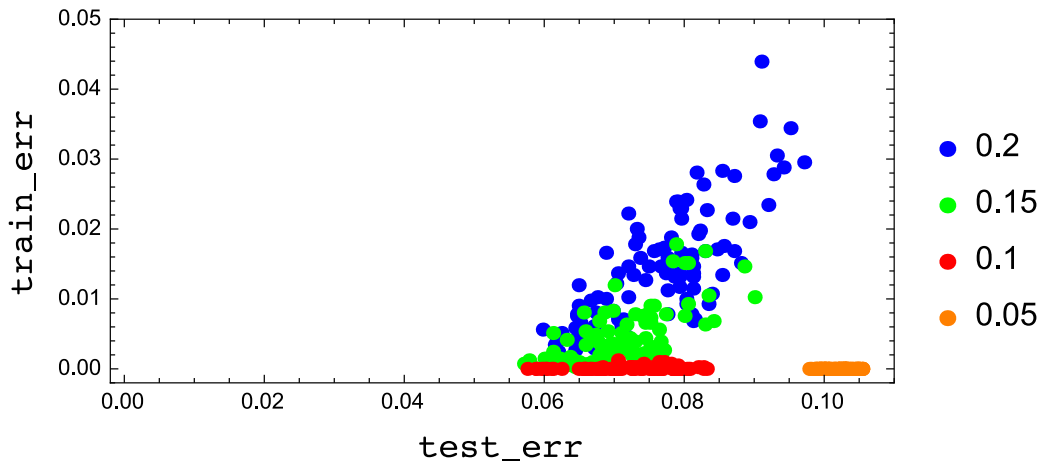}
\end{center}
\caption{Distribution of training and test (generalization) error for RRR-trained networks on the depth 3 majority-gate circuit data. The four distributions differ only in the value of the margin $\Delta$ used in training.}
\label{fig:fig10}
\end{figure}

In the final experiment with majority-gate data we compare generalization (\verb+test_err+) for data generated by circuits of increasing depth. We fixed all the hyperparameters at the same setting determined above, except that we used a further doubled $\Delta=0.2$ for the $n=4$ data (where again RRR had no trouble reaching $\verb+train_err+=0$). As before, we used the fully connected architecture with $n-1$ hidden layers of 26 neurons.
These results, shown in Figure \ref{fig:fig11}, are compared in Figure \ref{fig:fig12} with those obtained by the \texttt{scikit-learn} classifier trained on the same data, architecture, activation function and batchsize. The best results for the gradient based method were obtained with the simple SGD optimizer in the adaptive learning rate mode, and training was terminated when the loss improved by less than $10^{-5}$, typically after about 1000 epochs. To get good SGD results the initial learning rate had to be large, $\eta=0.5$ for $n=2,3$ and $\eta=1.0$ for $n=4$. Even so, in about 8\% of the trials on the $n>2$ data the final training error was above 1\%.

Although the data in this classification task are small by current standards, the ability of both RRR and SGD to generalize, even with modest precision, from seemingly random strings of bits is truly `superhuman'. Whereas memorizing 4096 items is well within the scope of human savants, gleaning an underlying pattern and applying it to another 4096 items has never, to the best of our knowledge, been demonstrated by a human subject.

\begin{figure}[t!]
\begin{center}
\includegraphics[width=5.in]{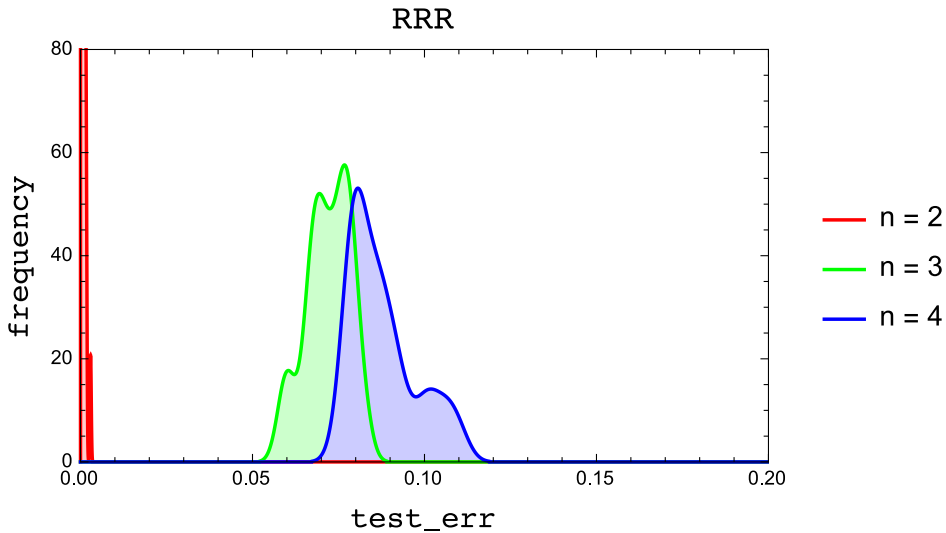}
\end{center}
\caption{Behavior of RRR generalization (distribution of test error in final 10 epochs of 10 trials) with increasing depth $n$ of the majority-gate generated data.}
\label{fig:fig11}
\end{figure}

\begin{figure}[t!]
\begin{center}
\includegraphics[width=5.in]{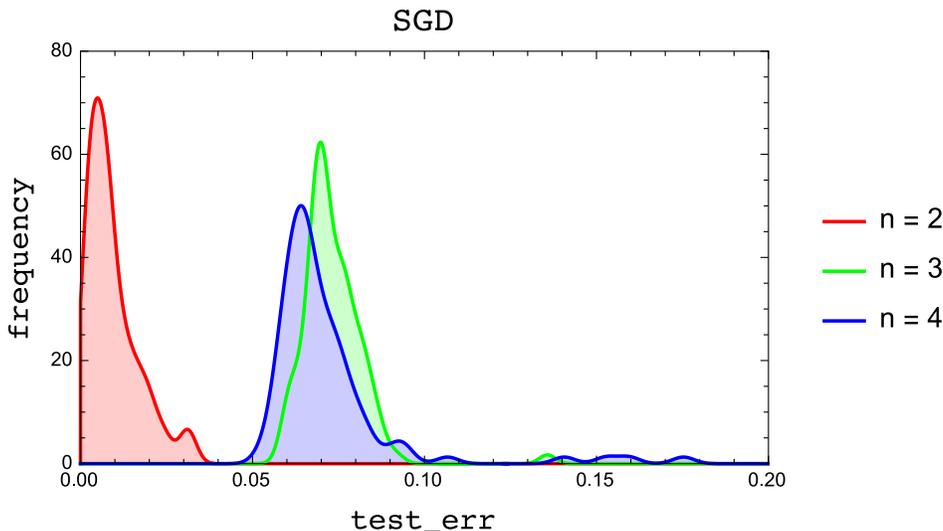}
\end{center}
\caption{Same as Figure \ref{fig:fig11} but for 100 trials of SGD. Aside from some outliers, SGD does better on average for the deepest data but, unlike RRR, fails to get perfect generalization for $n=2$ data on the small architecture.}
\label{fig:fig12}
\end{figure}

\subsubsection{MNIST with eccentric exemptions}\label{sec:EE}

We used the MNIST data set to test the strategy of improving generalization by exempting a given number of items during training. As described in detail in section \ref{sec:corruptdata}, this is where we slightly attenuate the $A$ constraints by dropping the class constraint on \verb+EE+ items, the ``eccentric exemptions", where the exempted items are determined dynamically by the projection principle that the distance to the class constraint, of the retained items, is minimized.

Even with a fixed architecture and choice of activation we have at our disposal another means of potentially improving generalization: the margin $\Delta$ we impose on the correct-class node. Naively, increasing $\Delta$ should improve generalization because it increases the separation of classes in the output layer. But this is a very different mechanism than letting the network learn to exempt data that look eccentric within the rest of the data. Since it is hard to theorize which strategy will be better for generalization, we approach this as an experiment. All results were obtained using \verb+RRRclass_x.c+\footnotemark[\value{footnote}] and $10^4$ training and test items from the MNIST data set.

\footnotetext{\texttt{github.com/veitelser/LWL}}

As in the preceding study our hyperparameter optimization (aside from \verb+EE+ and $\Delta$) was only rough and yielded $\beta=1$, $\Omega=2$, and $\Upsilon=1$. To minimize the work we selected a small, fully connected architecture with only one hidden layer: $784\to 20\to10$. Anticipating that only a few percent of the data would call for exemption, we trained on relatively large batches so that all batches would have some exempted items. We chose $|K|=1000$ and let $\verb+ee+=\verb+EE+/|K|$ vary between 0 and 1.5\%. For this batch size $ \verb+RRR_iter+=100$ gave the most efficient training, as quantified by \verb+GWMs+, and we trained for 200 epochs.

\begin{figure}[t!]
\begin{center}
\includegraphics[width=4.5in]{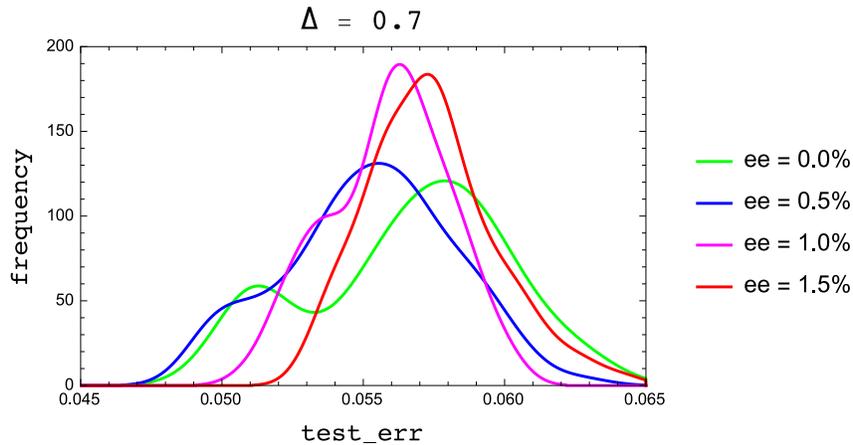}
\end{center}
\caption{Behavior of RRR generalization (distribution of test error in final 10 epochs of 10 trials) when the number of exempted MNIST training data is varied and the margin parameter is fixed at $\Delta=0.7$.}
\label{fig:fig13}
\end{figure}

\begin{figure}[t!]
\begin{center}
\includegraphics[width=4.5in]{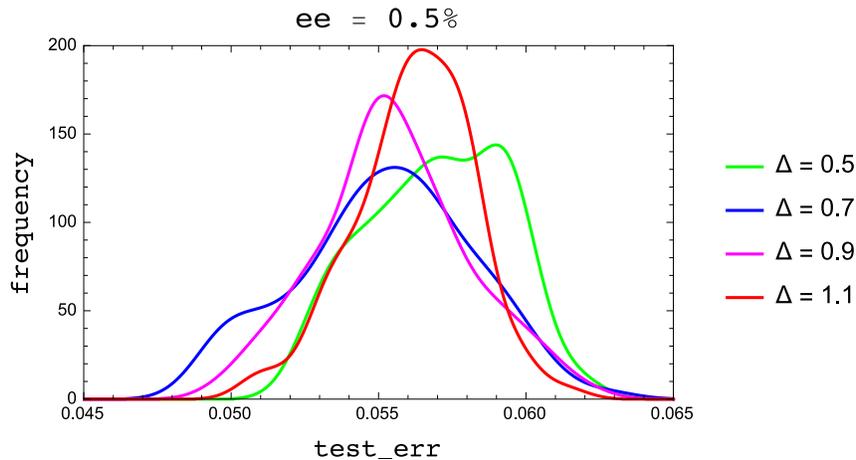}
\end{center}
\caption{Same as Figure \ref{fig:fig13} but with a varying $\Delta$ and fixing the exempted data at 0.5\%.}
\label{fig:fig14}
\end{figure}

Figures \ref{fig:fig13} and \ref{fig:fig14} show the variation in the distribution of test error when respectively \verb+ee+ and $\Delta$ are tuned around their best values ($\verb+ee+=0.5\%$, $\Delta=0.7$). Each distribution sampled the final 10 epochs of 10 trials. We see that the effect of both parameters on generalization is small, at least when compared against the widths of the distributions. Although few in number, the 50 exempted items over the entire training data, shown in Figure \ref{fig:fig15} for one run, are in many cases strikingly bad examples of handwriting. SGD does poorly on the same data, batch size and small architecture after optimizing the initial learning rate. The test error in 100 trails had mean 0.073 and was never below 0.061.

With hindsight, the MNIST data set was not the best choice to demonstrate the value of exempting items in training to improve generalization. Only the final few percentage points pose any difficulty, that is, accuracy on the same small fraction of outliers that the exemption strategy ignores!

\begin{figure}[t!]
\begin{center}
\includegraphics[width=4.5in]{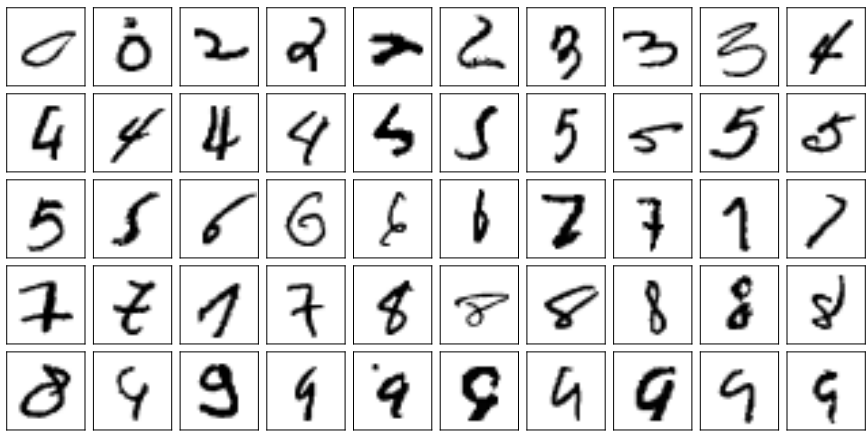}
\end{center}
\caption{Final exempted MNIST training data in one run.}
\label{fig:fig15}
\end{figure}

\section{Representation learning}\label{sec:replearn}

In signal processing language, representation learning is a scheme for lossy compression of data vectors, $x\to z$, where the code vector $z$ retains all the salient information necessary to reconstruct a good approximation of $x$ given only $z$. Generative models, an application of representation learning, impose the additional demand of knowing how to construct codes $z$, \textit{de novo}, that decode to vectors $x$ that are hard to distinguish from actual data.

Autoencoder networks, such as shown in Figure \ref{fig:fig3}(c), are widely used in representation learning. The lower half of the network, the encoder, receives data $x$ in the input layer and computes the corresponding code $z=\mathcal{E}(x)$ in the code layer. In a well trained autoencoder, the decoding performed by the upper half of the network should closely match the original data, or $\mathcal{D}(z)\approx x$. In addition to a loss associated with the quality of the reconstruction, standard (gradient-based) training methods introduce additional loss functions to give the code vectors useful attributes.

 We now describe a scheme for constructing generative models that avoids loss functions. Loss functions are attractive, in part, because they enable gradient methods to act on probability distributions, when these are parameterized and impose structure on the distribution of code vectors. Since our training method does not rely on gradients, we can work without parameterized probability distributions. In fact, in our scheme we never need to go beyond empirical distributions, that is, representative collections of data and code vectors. Our model consists of two parts: (i) an autoencoder with additional constraints on the code vectors, and (ii) a special case of the relabeling classifier described in section \ref{sec:wronglabels}. The adjectives ``variational" and ``adversarial" do not apply to either of these parts.
 
 Using constraints we will try to train our autoencoder to construct codes $Z$ that have the following three properties:
 \begin{itemize}

\item $Z$ should be a \textit{invertible}. This means that for every $x\in \mathcal{D}(Z)$ there is essentially a unique code $z\in Z$ that decodes to $x$ (sufficiently distinct codes never decode to the same $x$). This unique code is of course $z=\mathcal{E}(x)$, so $\mathcal{D}$ has an inverse and it is $\mathcal{E}$. We will impose this property on $Z$ with the constraint $\mathcal{E}(\mathcal{D}(z))\approx z$ on samples $z\in Z$.
 
 \item $Z$ should be \textit{data-enveloping}, in the sense that if we encode any data $x$, $\mathcal{E}(x)=z$, then we always can find a $z'\in Z$ such that $z'\approx z$.
 
 \item Finally, we choose $Z$ to be \textit{disentangled}, that is, $Z=Z_1\times\cdots\times Z_{|C|}$ is a product of empirical distributions for each node in the code layer.
 
 \end{itemize}
 We make the distinction between the \textit{code} being disentangled (our usage) and the \textit{representation}, provided by the encoder $\mathcal{E}$, being disentangled (common usage). The former is the weaker property but the one that is more straightforward to realize with constraint-based training. The second component of our generative model, the relabeling classifier, restores the full functionality. This component, the relabeling classifier, does not rely on the code $Z$ being disentangled. We choose $Z$ to be disentangled mostly for the technical benefit of easy sampling. Invertible, data-enveloping, and disentangled codes will be referred to as iDE codes.
 
It seems unrealistic to us, to expect that encoders can always bijectively map data into a simple code space, such as is implicit in the widely used multi-variate normal distributions for codes. Although the universal approximator property (of deep network encoders) is powerful, a connected domain for $Z$ should come at a price when the data $X$ has disconnected components. In that scenario the decoder must either have strong discontinuities or introduce interpolation artifacts. In more concrete terms, a seamless morphing of an MNIST 3 into an 8 (along a path in code space) is less a display of semantic brilliance than a casualty of code space crowding. In our autoencoder, by contrast, the encoding of actual data is only required to be injective, that is, $\mathcal{E}(X)\subset Z$.

To complete the generative model we train a classifier $\mathcal{C}$ that can recognize special codes $z\in Z$ that are hard to distinguish from the encodings by $\mathcal{E}$ of true data. Rejection sampling of $Z$, with this classifier serving as filter, will then generate $z$ that decode to $\mathcal{D}(z)=x$ that are hard to distinguish from true data.

In more detail, the binary classifier $\mathcal{C}$ is constructed as follows. After training the autoencoder we generate a body of labeled codes as the union $\tilde{Z}=\tilde{Z}(0)\cup \tilde{Z}(1)$, where $\tilde{Z}(1)=\mathcal{E}(X)$ is the encoded data with label 1 (genuine) and $\tilde{Z}(0)$ is a uniform sampling of the iDE code $Z$ with size of our choosing and label 0 (fake). Since the genuine codes, by the enveloping property, occupy some fraction of the code space $Z$, when we train $\mathcal{C}$ to distinguish genuine from fake codes we must respect the fact that we cannot expect to get the false-positive rate to be zero. We do not know this false-positive rate and train $\mathcal{C}$ with a \textit{false-positive allowance}, or \verb+fpa+, as a hyperparameter to be optimized. The true-negative rate, on the other hand, should be zero because we know the codes $\tilde{Z}(1)$ are genuine. If we set a small value for \verb+fpa+, then $|\tilde{Z}(0)|$ should be chosen to be large so that the number of false-positive data seen by the classifier, $\verb+FPA+=\verb+fpa+ \times |\tilde{Z}(0)|$, is reasonable.

\begin{figure}[t]
\begin{center}
\includegraphics[width=5.5in]{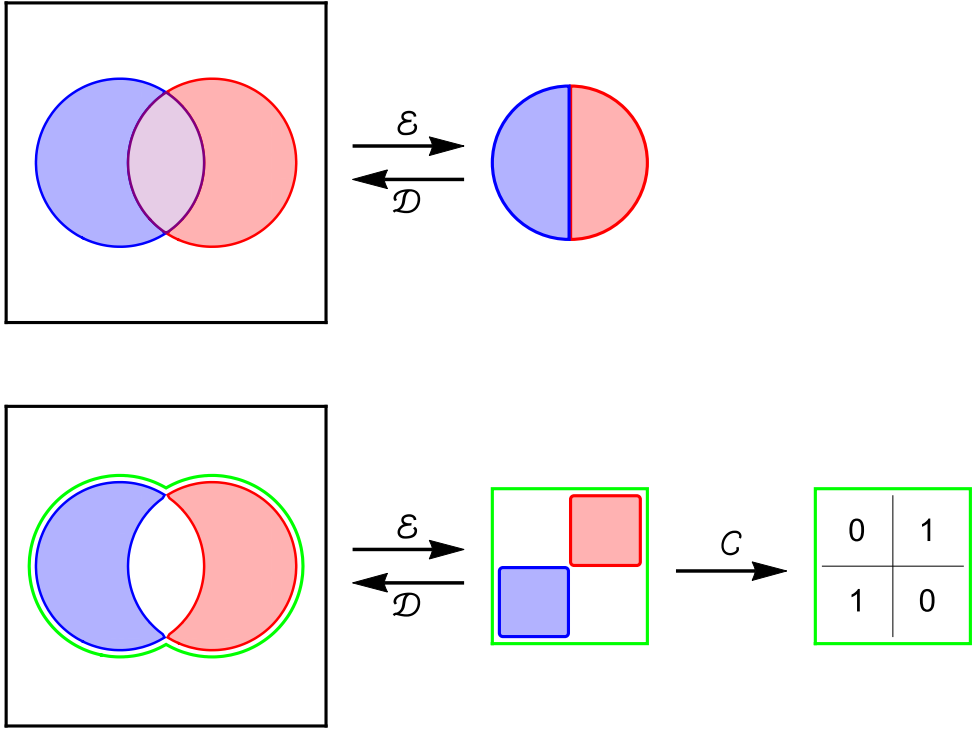}
\end{center}
\caption{Comparison of a conventional autoencoder (top row) and the proposed model based on iDE codes (bottom row), when the data is not a connected set --- rendered as the \textsc{Xor} of two disks. In the conventional design, decoding will have interpolation artifacts (purple region of ``Venn diagram"). This is avoided by iDE encoding, which only seeks to envelop the data (green boundary curve) while also constraining the representation to be disentangled (tensor product rendered as a green square). The generative model for iDE codes requires, in addition, a classifier $\mathcal{C}$ trained to identify codes that correspond to data. When the data codes occupy half of the code space, as in this cartoon, the best setting of the \vtt{fpa} training parameter would be $1/2$.}
\label{fig:fig16}
\end{figure}

Summarizing, the generative model comprises the trio $(\mathcal{C},\mathcal{D},Z)$, where the last two are products of the autoencoder. To generate fake data that is hard to distinguish from genuine data we take samples $z\in Z$, accept those classified as genuine (true) by $\mathcal{C}$, and output $\mathcal{D}(z)=x$. If we trained the classifier $\mathcal{C}$ with a false-positive allowance of \verb+fpa+, then the rate of accepted code samples or fake data will be \verb+fpa+. Figure \ref{fig:fig16} contrasts this design with more conventional designs, such as variational autoencoders.

The key hyperparameters that control the difficulty of constructing a generative model of the kind just described are the number of nodes $|C|$ in the code layer of the autoencoder and the false-positive allowance, \verb+fpa+, when training the classifier. To take advantage of an enlarged code space free of interpolation artifacts, $|C|$ should be larger than what is usually considered optimal. On the other hand, when $|C|$ is large it may be too easy for the classifier to distinguish genuine and fake codes. The resulting small \verb+fpa+ would require an unreasonably large body of training data $|\tilde{Z}(0)|$ to see examples of viable fakes.

\subsection{Autoencoder details}\label{sec:autoencoder}

The variables and constraints that apply to our autoencoder, where codes are constrained to have the three iDE properties, are not that different from those of the basic classifier of section \ref{sec:classcon}. The differences, in brief, are the following:
\begin{itemize}
\item The network is cyclic, where output nodes are not distinct nodes but identified with the input/data nodes $D$.
\item The data constraint \eqref{classA2} takes two forms: one for data vectors at the data nodes $D$ and another for code vectors at the code nodes $C$.
\item An activation constraint \eqref{classA3} is imposed at all nodes on which there is no data/code constraint. 
\item Constraints \eqref{classA4} and \eqref{classA5} coming from the class label are absent.
\end{itemize}
The cyclic structure of the network, combined with the data/code constraints, imposes the reconstruction property $\mathcal{D}(\mathcal{E}(x))\approx x$ for the data as well as the invertibility property $\mathcal{E}(\mathcal{D}(z))\approx z$ of iDE codes. To see how the other two properties of iDE codes are imposed we need to describe how the autoencoder constructs the code $Z$.

The construction of $Z$ takes place in the setting of data batches. A data batch is the union $K=K_d\cup K_c$, where $K_d$ is the same as the data batch in the simple classifier, where data item $k\in K_d$ has data vector $d[k]$ (but now there is no class label). The code batch $K_c$ is a collection of code vectors, and the enveloping/disentangled properties derive from how the codes $c[k]$, $k\in K_c$, are constructed.

The codes $c[k]$ are $|K_c|$ uniform samples from the product $Z=Z_1\times\cdots\times Z_{|C|}$ of empirical distributions at the code nodes that the training algorithm manages. This ensures the disentangled property. In order for $Z$ to be enveloping as well, the training algorithm constructs each $Z_i$ from the encodings of a suitably large body of data. Since these 1D distributions are not complex, they are well represented by relatively few samples. A practical solution, with all 1D distributions of size $|K_d|$, is to set
\begin{equation}
\forall\, i\in C: \quad Z_i=\{\mathcal{E}(d[k])_i\;\colon k\in K_d\},
\end{equation}
where the network parameters of the encoder $\mathcal{E}$ are those obtained from training on the previous batch. After initial transients, due to randomly initialized weights and biases, these 1D distributions quickly settle down from one batch to the next provided $|K_d|$ is not too small.

We get a system of constraints that needs to make the fewest exceptions for node type (data, code, hidden) when the domain of the data $d[k,i]$ and code $c[k,i]$ values is the same as the image of the activation function $f$. When the former are the discrete set $\{0,1\}$, we use the step activation function shown in Figure \ref{fig:fig2}. For image data we scale pixel values into the interval $[0,1]$ and instead use the continuous ``zigmoid" function shown in Figure \ref{fig:fig17}.

\begin{figure}[t]
\begin{center}
\includegraphics[width=3.in]{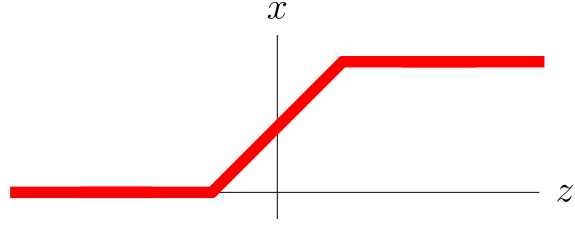}
\end{center}
\caption{The zigmoid activation function. The non-constant part interpolates between 0 and 1 over the range $\Delta$.}
\label{fig:fig17}
\end{figure}

Below is the complete set of constraints used for training the autoencoder. Note that the symbol for post-activation node variables is always $x$, that is, we drop the earlier use of $z$ for nodes in the code layer.
\begin{subequations}
\begin{align}
\mbox{$A$ constraints}\qquad\qquad\qquad\qquad&\nonumber\\
\forall\,k\in K_d\cup K_c,\, i\to j\in E&:\quad x[k,i\to j]=x_A[k,i]\label{autoA1}\\[10pt]
\forall\,k\in K_d, i\in D&:\quad x_A[k,i]=d[k,i]\label{autoA2}\\
\forall\,k\in K_c, i\in C&:\quad x_A[k,i]=c[k,i]\label{autoA3}\\[10pt]
\forall\,k\in K_d\cup K_c,\,i\in D\cup C\cup H&:\quad x_A[k,i]=f(y[k,i]-b[k,i])\label{autoA4}\\[10pt]
\forall\,k\in K_d\cup K_c,\, i\to j\in E&:\quad w[k,i\to j]=w_A[i\to j]\label{autoA5}\\
\forall\,j\in D\cup C\cup H&:\quad \sum_i w_A^2[i\to j]=\Omega^2\label{autoA6}\\
\mbox{$B$ constraints}\qquad\qquad\qquad\qquad&\nonumber\\
\forall\,k\in K_d\cup K_c,\,j\in D\cup C\cup H&:\quad \sum_i x[k,i\to j]\,w[k,i\to j]=\Omega\,y[k,j]\label{autoB1}\\[10pt]
\forall\,k\in K_d, i\notin D&:\quad b[k,i]=b_B[i]\label{autoB2}\\
\forall\,k\in K_c, i\notin C&:\quad b[k,i]=b_B[i].\label{autoB3}
\end{align}
\end{subequations}
As the kinds of constraints are no different from those in the simple classifier, the same projections apply. The only difference, owing to the absence of an output layer with class constraints, is that there is a single metric parameter for the $y$'s and $b$'s
\begin{equation}
g^2(i)=\mathrm{outdeg}(i),\quad i\in D\cup C\cup H,
\end{equation}
and no $\Upsilon$ hyperparameter. Related to this are exceptions to the projections to the consensus side constraints \eqref{autoA4} when $i\in D$ or $i\in C$. Since the $x$ for these nodes is fixed by the data or code vector, only $y$'s and $b$'s are changed. These are easy projections, where the $y$ and $b$ of each data/code node is shifted by the same, oppositely signed amount to make the argument of the activation function produce the intended value.

\subsubsection{Training}

Training on a batch, comprising both data vectors and code vectors, begins with a ``feed-around" the cyclic network starting from the data/code layer $x$'s. Using the weights and biases from the random initialization, or parameters from the previous batch of training, this sets all the other $x$'s and the $y$'s as well (we need to be careful not to overwrite the $x$'s in the data/code layer). In addition to initializing variables for RRR optimization, this is also how we define the two reconstruction errors:
\begin{align}
(\vtt{data_err})^2 &=\frac{1}{|K_d| |D|}\sum_{{k\in K_d}\atop{i\in D}}\left(x_A[k,i]-f(y[k,i]-b_B[i])\right)^2\\
(\vtt{code_err})^2 &=\frac{1}{|K_c| |C|}\sum_{{k\in K_c}\atop{i\in C}}\left(x_A[k,i]-f(y[k,i]-b_B[i])\right)^2.
\end{align}
A combination of these is also the RRR constraint discrepancy at the start of iterations, since only constraint \eqref{autoA4} (for the data and code nodes) is not automatically satisfied by our initialization. More generally,
\begin{equation}
\begin{aligned}
(\vtt{RRR_err})^2=\frac{1}{|K_d|+|K_c|}\;\sum_{k\in K_d\cup K_c}&\left(\sum_{i\to j\in E}\left(\Delta x[k,i\to j]^2+\Delta w[k,i\to j]^2\right)\right.\\
+&\left.\sum_{i\in D\cup C\cup H}g^2(i)\left(\Delta y[k,i]^2+\Delta b[k, i]^2\right)\right),
\end{aligned}
\end{equation}
where $\Delta x= x_A-x_B$, etc. Since both the data vectors and the code vectors have elements in the unit interval, the rms-errors \vtt{data_err} and \vtt{code_err} should be small compared to 1 in a good representation. In the case of \vtt{RRR_err} we are mostly interested in how it decreases from one epoch to the next, not its value in absolute terms.

The network architecture, in particular the number of nodes in the code layer $|C|$, is one the autoencoder's most important hyperparameters. Another hyperparameter, the size of the code batch $|K_c|$, will depend on what we choose for $|C|$. Additional hyperparameters, pertaining to the constraints and the RRR algorithm, are the number of iterations per batch, \vtt{RRR_iter}, the step size $\beta$, the norm $\Omega$ on the weights, and the margin parameter $\Delta$ that sets the input range over which the activation function $f$ (step or zigmoid) changes from 0 to 1.

\subsection{Relabeling classifier details}

The relabeling classifier we use in the generative model is a special case of the one described in section \ref{sec:wronglabels}. There are two class nodes: $c_1$ for (encodings of) true data and $c_0$ for fakes. Labeled data for classification comprises true codes, $K(1)$ (encodings of actual data vectors), and samples of the disentangled code (fakes), $K(0)$. When $k\in K(1)$, the classifier should produce a negative value for $y[k,c]-b[c]$ for $c=c_0$ and a value that exceeds $\Delta$ for $c=c_1$. Most items $k\in K(0)$ should give the opposite result. However, as explained above, we relax this constraint so that items $k\in\widetilde{K}(0)\subset K(0)$, where $|\widetilde{K}(0)|=\vtt{FPA}$ is the false positive allowance, are allowed to satisfy the true data constraints instead.

Projecting to the modified class constraint is similar to the projections for compromised data described in section \ref{sec:compromised}. For each item in $K(0)$ we compute two projection distances, $d(0)$ (given label) and $d(1)$ (relabeled). Of the differences $d^2(0)-d^2(1)$ that are positive (for which the relabeling would be closer), we perform a sort and apply up to \vtt{FPA} relabels of those items at the top.

This relabeling classifier has all the hyperparameters of the simple classifier with the addition of the false positive allowance rate \vtt{fpa}. The size of the fake code batch $|K(0)|$ should be viewed as a hyperparameter and set large enough that relabeled codes are well sampled, that is, so $\vtt{FPA}=\vtt{fpa}\times |K(0)|$ is large. Epoch to epoch progress in training is monitored by the true-negative rate \vtt{tn} and false-positive rate \vtt{fp}. Training is successful when \vtt{tn} is small and \vtt{fp} does not exceed \vtt{fpa} by too much.

\subsection{Experiments}

For the training experiments in this section we used the programs\footnotemark[\value{footnote}] \vtt{RRRauto.c} for the autoencoder and \vtt{RRRclass_fp.c} for true/fake code classification with a false positive allowance. We do not present direct comparisons because the software in \texttt{scikit-learn}, the source of our mainstream algorithms, does not have the required functionality. Comparisons with state-of-the-art representation learning algorithms are planned for the future, after the RRR software has received some enhancements, e.g. convolutional layers.

\footnotetext{\texttt{github.com/veitelser/LWL}}

\subsubsection{Binary encoding}\label{sec:binaryencoding}

An interesting toy example of representation learning was considered by \cite{rumelhart1985learning}, in the same article that introduced the back-propagation algorithm. The question is whether a network can be trained to encode all $2^n$ 1-hot vectors of length $2^n$ into binary codes of length $n$, and then to decode these back to the original 1-hot vectors. We will see that a two-layer autoencoder network ($2^n\to n\to 2^n$), with step activation (Fig. \ref{fig:fig2}) at all nodes, is capable of this task. For the $n=3$ network with sigmoid activation, \cite{rumelhart1985learning} found that the SGD algorithm also was able to find parameters that solved this autoencoder problem. However, the sigmoid function allowed, and usually included, the number $1/2$ in addition to 0 and 1 in the code. We are not aware of any follow-up studies, such as the behavior of training as $n$ grows.

By using the 2-valued step activation function we were able to train networks that solved the strict binary encoding problem, that is, for codes $\{0,1\}^{\times n}$. The training data comprised $2^n$ 1-hot vectors $d[k], k\in K_d$, with constraints $\mathcal{D}(\mathcal{E}(d))=d$ applied at the data layer, and the $2^n$ binary codes $c[k], k\in K_c$, with constraints $\mathcal{E}(\mathcal{D}(c))=c$ applied at the code layer. By the nature of the problem, an autoencoder (with step activation) that has been successfully trained just on $K_d$ would also be able to autoencode the codes $K_c$, though the converse is not true. We found that training on $K_d$ and $K_c$ jointly worked better than training on just $K_d$.

Because of the discontinuous activation ``function" (Fig. \ref{fig:fig2}), we were not surprised that a somewhat large $\Omega$ parameter was favored by this application. We found that $\beta=0.5$ and $\Omega=100$ gave good results up to $n=5$, the largest instance we tried. In appendix \ref{sec:binarycodeappendix} we show that the normalization constraint on the weights places constraints on the margins of the step-activations in a perfect encoder/decoder. Specifically, the margins $\Delta_c$ and $\Delta_d$ at respectively the code and data nodes must satisfy
\begin{equation}
\Delta_c\le \frac{2}{\sqrt{|D|}},\qquad \Delta_d\le \frac{1}{\sqrt{|C|}}.
\end{equation}
In the equality case the corresponding weights are unique (up to the $(2^n)!$ ways of mapping the $2^n$ 1-hot vectors to the integers $0,1,\ldots, 2^n-1$) and differ only in sign. The uniform value\footnotemark[\value{footnote}] $\Delta=0.4$ is consistent with both of these except for $n=5$, where we used $\Delta=0.34$ instead. When we set  $\Delta=0.4$ for $n=5$, RRR often finds a near solution (proximal points on the two constraint sets) and correct binary encodings when the activation function, after training, is replaced by the usual zero-margin step function. With $\Delta$ set close to its maximum value, the trained autoencoder weights (see Fig. \ref{fig:fig18}) are narrowly distributed, differing (in each layer) mostly in sign.

\footnotetext{\texttt{RRRauto} imposes the same margin on all the step activations.}

\begin{figure}[t]
\begin{center}
\includegraphics[width=5in]{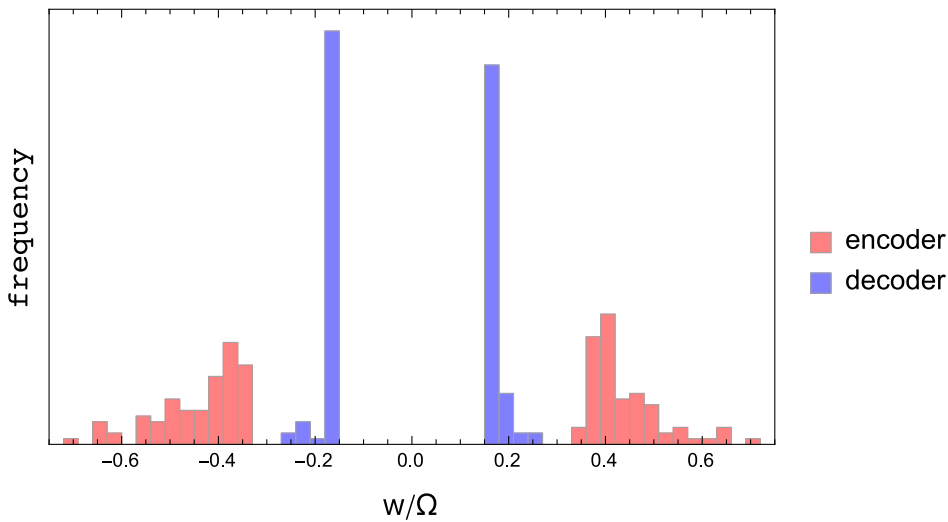}
\end{center}
\caption{Final autoencoder weights for the $n=5$ binary encoding task.}
\label{fig:fig18}
\end{figure}

The discovery of binary encoding/decoding weights, starting from random, broadly distributed weights, does not have the ``aha" behavior we observed in some of the other combinatorial tasks, such as LEDM factorization (Fig. \ref{fig:fig4}). Instead, we find that \vtt{RRR_err} behaves very similarly across training runs, making incremental progress and differing significantly only when \vtt{RRR_err} is very small, where some runs succeed while others get trapped and fail to find a perfect solution. Figure \ref{fig:fig19} shows this for $n=5$. The success rate is 80\% for $n=3$ and drops to 30\% and 25\%, respectively,  for $n=4$ and $n=5$.

\begin{figure}[t]
\begin{center}
\includegraphics[width=4.in]{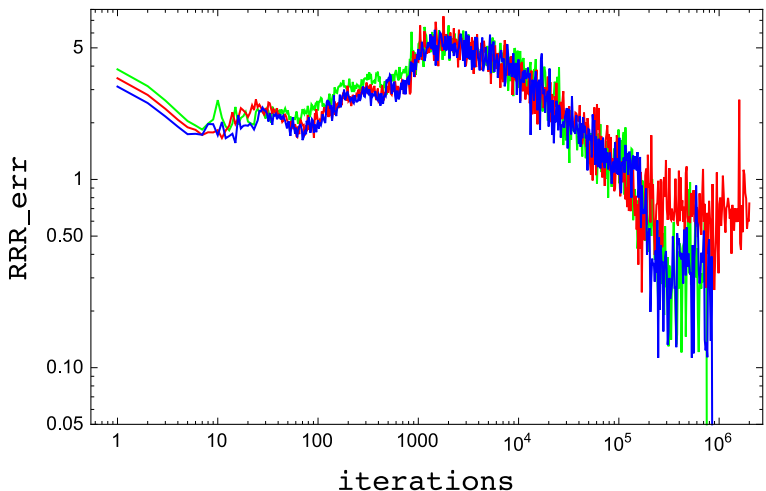}
\end{center}
\caption{Three training runs for $n=5$ binary encoding, two successful (blue, green) and one unsuccessful (red).}
\label{fig:fig19}
\end{figure}

\subsubsection{MNIST Digits}\label{sec:mnistgen}

To demonstrate generative models based on iDE codes we return to the MNIST data set. Using $10^4$ items from the training data, we trained an autoencoder with architecture ${784\to 200\to 10\to 200\to 784}$ and zigmoid activation (Fig. \ref{fig:fig17}) as described in section \ref{sec:autoencoder}. Hyperparameters were selected, by trial and error, to give the fastest reduction in \vtt{data_err} for a given amount of work (\vtt{GWMs}), since the other reconstruction error, \vtt{code_err}, was always significantly smaller. This yielded $\Omega=10$ and $\Delta=0.4$. Since the larger number and size of the hidden layers make this representation learning task less combinatorial in nature than the binary encoding problem, we used the Douglas-Rachford time step $\beta=1$. Also, given the $\vtt{data_err}>\vtt{code_err}$ asymmetry, the training batches were structured to have 10 times as many data constraints, $\mathcal{D}(\mathcal{E}(x))=x$, as code constraints, $\mathcal{E}(\mathcal{D}(z))=z$. Using a combined batch size of $200+20$, we applied 50 RRR iterations per batch for 40 epochs, or just under 7000 \vtt{GWMs} of work. The final reconstruction errors were $\vtt{data_err}=0.153$ and $\vtt{code_err}=0.040$. To put the first of these in perspective, a linear model with the same number of code nodes cannot have an error less than 0.185 (by SVD analysis). The improvement over the linear model comes with the added benefits that the encoding is non-negative and disentangled. The ten distributions whose product give the final iDE code are shown in Figure \ref{fig:fig20}.

\begin{figure}[t]
\begin{center}
\includegraphics[width=5.5in]{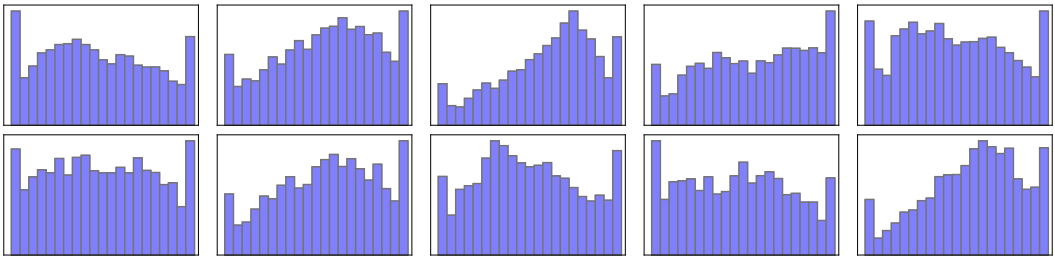}
\end{center}
\caption{iDE encoding of MNIST digits for a code layer of 10 neurons. The iDE code is the product distribution of the 10 distributions above, where the peaks at the ends of the intervals correspond to 0 and 1 activation of the respective code layer neuron.}
\label{fig:fig20}
\end{figure}

Figure \ref{fig:fig21} gives a subjective assessment of the quality of the autoencoder. The images in the top panel were generated by passing 50 items from the MNIST test data through the autoencoder. Clearly there is much room for improvement! The images in the bottom panel were generated by applying the decoder $\mathcal{D}$ to 50 samples of the iDE code shown in Figure \ref{fig:fig20}. Recall that our generative model is based on the principle that some fraction of the lower samples in Figure \ref{fig:fig21} are of the same quality as the upper samples, and that a classifier $\mathcal{C}$ can be trained for this task.

The data for training $\mathcal{C}$ comprised the encodings (by the autoencoder's $\mathcal{E}$) of $10^4$ MNIST training data with label ``genuine," and $2\times 10^4$ samples of the iDE code with label ``fake." We did not explore the effect of changing the number of fakes in the training data. The doubled number of fakes simply makes the statement that the number of fake data is in principle unlimited in our scheme. For $\mathcal{C}$ we used ReLU activation and settled on architecture $10\to 50\to 50\to 50\to 2$ after checking that an additional layer or doubling the width did not significantly improve results. A rough trial-and-error search yielded nearly the same hyperparameters that worked well for the classifier of section \ref{sec:majgate}: $\beta=1$, $\Omega=5$, $\Upsilon=1$, and $\Delta=0.1$. Our strategy for selecting the false-positive allowance parameter, \verb+fpa+, was to start high and decrease by 5\% in subsequent runs until we noticed that the false-positive and true-negative rates (\verb+fp+ and \verb+tn+) on the training data made no progress toward their targets of \verb+fpa+ and 0\%, respectively. This yielded the setting $\verb+fpa+=25\%$.

Figures \ref{fig:fig22} and \ref{fig:fig23} show the evolution of the two error rates, averaged on batches, the entire training data, and also a test data set constructed exactly as the training data but using encodings of the MNIST test items for the ``genuine" codes. Each epoch of $3\times 10^4$ items was partitioned into batches of size 500 and 1000 RRR iterations were applied to each batch. After 20 epochs (3400 \verb+GWMs+) \verb+tn+ is still decreasing, while \verb+fp+ is holding steady at values above 25\%. We interpret this to mean that the quality of the allowed fraction of false-positives (fakes) is improving as well, because it is being defined relative to an improving representation of genuine codes. Not surprisingly, we see that fluctuations in \verb+fp+ and \verb+tn+ are anticorrelated. 

\begin{figure}[t]
\begin{center}
\includegraphics[width=4.5in]{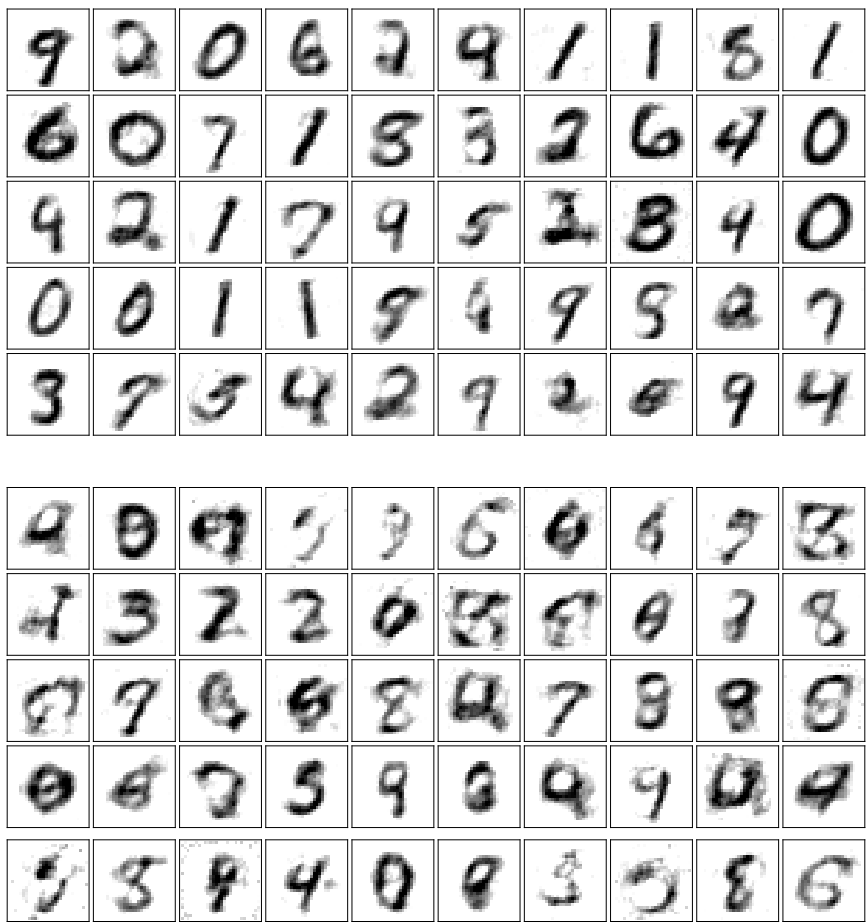}
\end{center}
\caption{\textit{Top:} The result of passing items from the MNIST test data through the autoencoder. \textit{Bottom:} Images produced by decoding samples of the iDE code shown in Figure \ref{fig:fig20}.}
\label{fig:fig21}
\end{figure}

\begin{figure}[t]
\begin{center}
\includegraphics[width=5in]{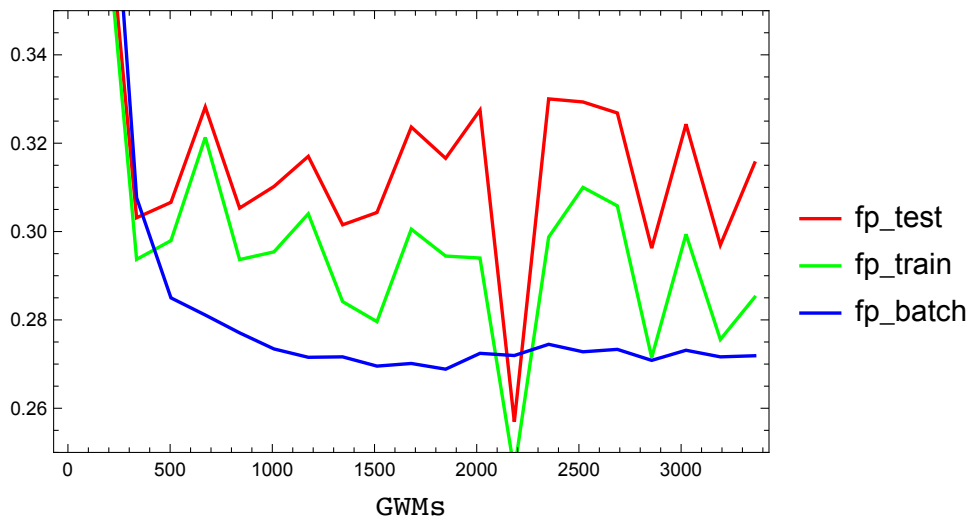}
\end{center}
\caption{Evolution of the three false-positive rates over 20 epochs when training on MNIST iDE codes with a false-positive allowance of 25\%.}
\label{fig:fig22}
\end{figure}

\begin{figure}[h]
\begin{center}
\includegraphics[width=5in]{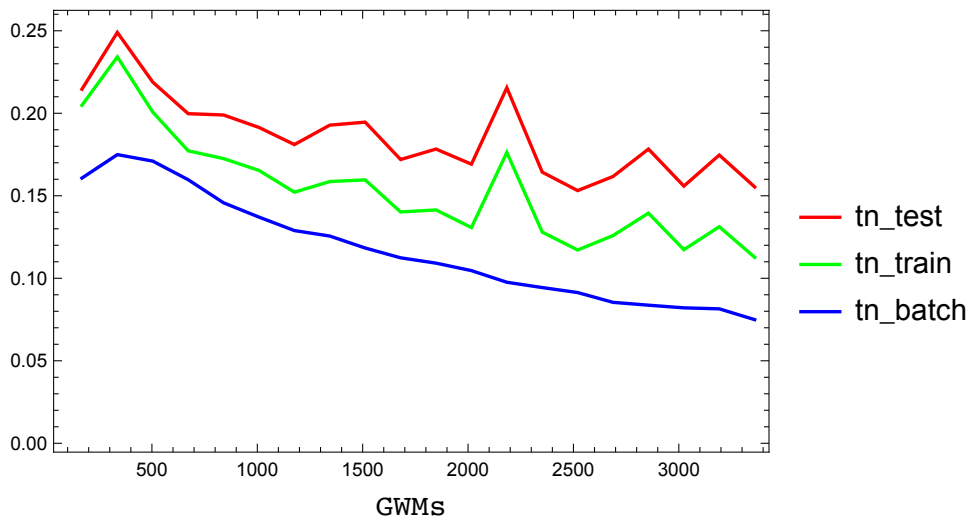}
\end{center}
\caption{Same as Figure \ref{fig:fig22} but for the true-negative rates.}
\label{fig:fig23}
\end{figure}

The final false-positive rate of the classifier, on test data, was 31.6\%. This is the rate at which randomly drawn iDE codes are accepted as genuine and, it is hoped, decode to images that resemble MNIST digits. Figure \ref{fig:fig24} shows a sample of 50 such images. One might argue that codes deemed genuine in the training set would still meet the definition of a generative model, since these too were generated \textit{de novo} from the iDE code. And because $\verb+fp+=28.5\%$ is lower for the training data, giving a more discriminating classifier, the quality of the generated images would improve. However, the output of such a generative model is limited by the number of training data unless one is willing to invest some amount of classifier-training work with each fake that the model produces.

\begin{figure}[t]
\begin{center}
\includegraphics[width=4.5in]{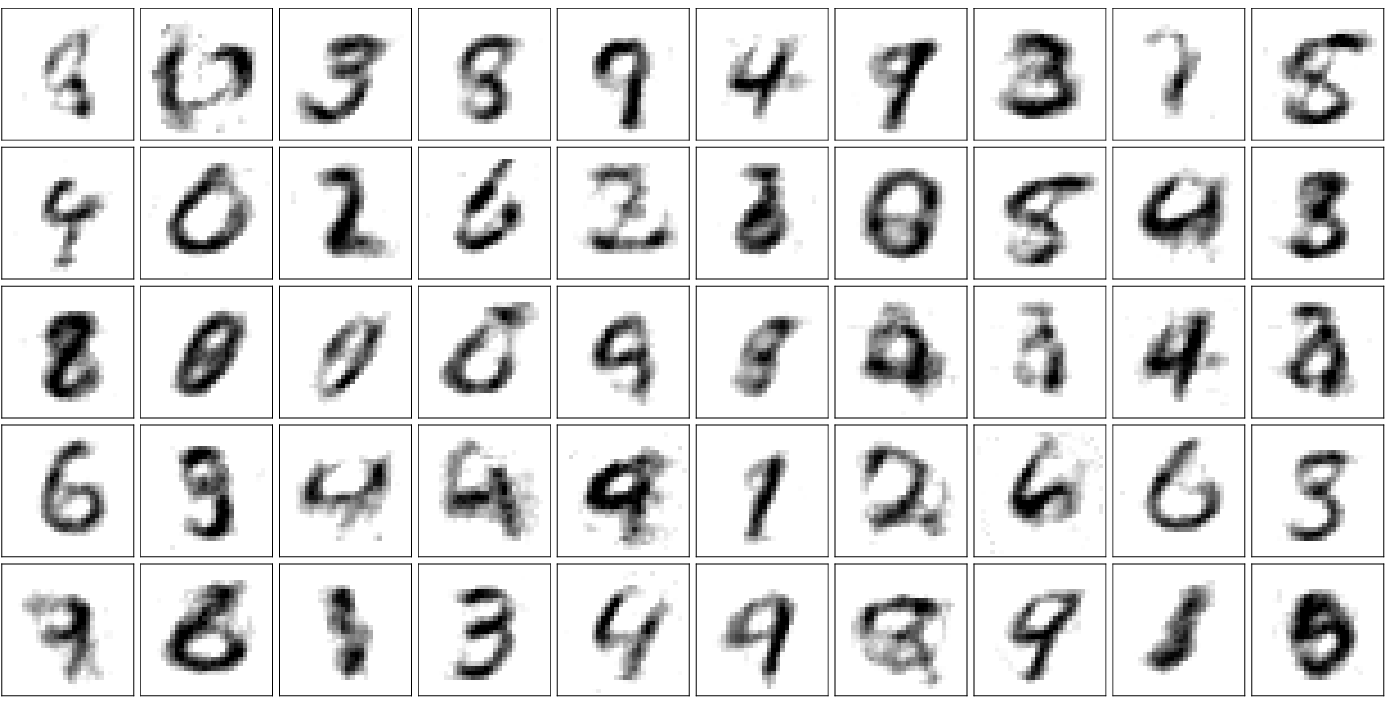}
\end{center}
\caption{The result of decoding iDE codes that were classified as ``genuine" by the classifier of the generative model.}
\label{fig:fig24}
\end{figure}

\section{Conclusions}\label{sec:conclusions}

The utility of neural networks for representing and distilling complex data cannot be overstated. Does this utility derive from the forgiving nature of the platform, on which even unsophisticated and often undisciplined training usually succeeds? Or have neural networks risen to the top because they are exceptionally well suited for gradient descent, the training algorithm one would like to use because of its intuitive appeal? One way to address these questions, and the one taken in this paper, is to try a radically different approach to training.

Our approach avoids gradients and loss functions and was inspired by phase retrieval, where the most successful algorithms take steps derived from constraint projections. We used the general purpose RRR algorithm which divides the constraints into two sets $A$ and $B$ such that in each iteration the algorithm exactly solves, in effect, one half of the training problem. The ``projection steps" of RRR still manage to be local: their computation distributes, not only over the network (at the level of individual neurons) but also over the items in the training batch.

We demonstrated the new approach in three standard settings: non-negative matrix factorization (a single-layer network with constraints), classification, and representation learning. For each we featured one application that was ``small and tricky" and another that was ``large and wild". The new approach was shown to be superior to gradient-based methods for the former and seemed to also hold promise for the latter. Even so, just as one sailor walking across the deck of an aircraft carrier will not alter its course, it is unrealistic to expect these findings to substantially impact the course of neural network training. Our concluding remarks will therefore focus on findings that might translate into the standard paradigm.

Formulating training as a constraint satisfaction problem brought up a number of questions that relate to generalization. In the constraint approach the weights $w$ and activations $x$ are treated on a more equal footing, especially in the constraint $x\cdot w=y$ that relates these to the pre-activation $y$ of the receiving neuron. Moreover, when defining the projections one has flexibility in setting the relative compliance of these variables, by breaking a rescaling invariance with the norm specification $\Omega$ on $w$. Small $\Omega$ favors resolving discrepancies at the receiving neuron while large $\Omega$ pushes changes to the inputs (to neurons lower in the network). By being forced to declare the relative compliance of weights and activations, the constraint approach drew attention to an interesting handle on generalization through the depth behavior of training. It is interesting that in all our experiments the best results were always obtained with $\Omega>1$.

Generalization also stands to gain from a new interpretation of activation functions made possible by the constraint approach. Although for comparison purposes our experiments mostly used the ReLU function, in the new approach an activation ``function" is a general constraint on pre- and post-activation pairs $(y,x)$. A gapped step-activation constraint (Fig. \ref{fig:fig2}) may improve generalization because training is forced to find weights that avoid gray areas for the neuron inputs. In the one experiment where this activation constraint was used, binary encoding in section \ref{sec:binaryencoding}, the best results were obtained when the gap/margin of the step was set near the maximum possible value (consistent with the norm constraint on the weights). When the sigmoid function was used with SGD on this problem \citep{rumelhart1985learning}, the codes found were not always strictly binary because the continuous function allowed activation $x=1/2$.  

Giving the training algorithm the latitude to exempt items from training by an objective criterion --- the projection distance --- is another way to improve generalization. An extension of this idea, giving the algorithm additionally the power to attach new labels to a bounded fraction of the training data, was used in the relabeling classifier. Within the constraint framework the implementation of these features is automatic, the bounds on the number of exempted items, or candidates for relabeling, being the only parameters. Introducing this functionality in gradient based methods, though possible, would not be nearly as direct.

The generative model based on iDE codes (section \ref{sec:replearn}) shows that working with constraints as opposed to loss functions is not an impediment to the creation of elaborate learning systems. In this application we saw that invertibility of the autoencoder and the disentangled property of the code are easily expressed through constraints. A special case of the relabeling classsifier, with a false positive allowance, then serves to identify codes in an expanded, ``enveloping" code space that decode to good fakes. This design for a generative model seems natural and can surely also be implemented in the loss function framework.

The article ``Tackling Climate Change with Machine Learning" \citep{rolnick2019tackling} is a call for engagement on probably the single most critical issue of our time. While comprehensive in surveying applications, the authors neglect to turn the mirror on themselves. The training of neural networks for natural language processing, an industry still in its infancy, is already a major consumer of energy \citep{strubell2019energy}.  It is to draw attention to this side of machine learning that we deliberately chose \textit{not} to use distributed processing on a massive scale, made possible by the constraint based approach, as a selling point. Wall clock time, number of training epochs, etc. should always take a back seat to energy consumption. While hardware developments shift the landscape, there is an algorithmic component of energy consumption unique to neural networks: the total number of weight multiplications over the course of training. This motivated the \vtt{GWMs} (giga-weight-multiplies) unit we introduced by which training algorithms can be given a fair ranking\footnote{This assumes the implementation is such that each multiply can be lumped with the associated memory access.}. We did not undertake a careful comparison with gradient methods in this regard, and it may well turn out that RRR is \textit{not} superior to SGD as measured by \vtt{GWMs}. The clear advantage RRR has in parallelizability would be vitiated by such a finding.




\newpage 
\appendix

\section{}\label{sec:RRRappendix}

This appendix is meant to be a concise, self-contained guide to the family of constraint satis\-faction algorithms to which RRR belongs. For an excellent and much more comprehensive review, see the article by \cite{lindstrom2018survey}.

\subsection{RRR as relaxed Douglas-Rachford}

The RRR algorithm derives its name from the following expression for the update of the search vector $x$,
\begin{equation}\label{RRR}
x'=(1-\beta/2)x+(\beta/2) R_B\left(R_A(x)\right),
\end{equation}
where
\[
R_A(x)=2P_A(x)-x,\qquad R_B(x)=2P_B(x)-x
\]
are reflections through the sets $A$ and $B$. The parameter $\beta$ ``relaxes" the ``reflect-reflect-average" case ($\beta=1$) that the convex optimization literature refers to as the Douglas-Rachford iteration. The projections, such as to set $A$,
\[
P_A(x)=\argmin_{x'\in A}{\|x'-x\|},
\]
define a unique point only when the constraint sets are convex. However, since non-uniqueness in the non-convex case arises only for $x$ on sets of measure zero, in our computational setting we treat the projections (as we do in software) as proper maps. Rewriting the reflections in \eqref{RRR} in terms of projections,
\begin{equation}
x'=x+\beta\left(P_B(2 P_A(x)-x) - P_A(x)\right),
\end{equation}
we see that $\beta\to 0$ corresponds to the flow interpretation. At a fixed point we have $x'=x$, and therefore $P_B(2 P_A(x)-x) = P_A(x)$ must be a solution as it lies in both $A$ and $B$. However, the fixed point itself is not in general a solution. The fixed-point/solution relationship and the attractive nature of fixed points is explained in section \ref{sec:RRRconv}.

\subsection{ADMM with indicator functions}

By using indicator functions for sets $A$ and $B$ as the two objective functions in the ADMM formalism \citep{boyd2011distributed}, the ADMM algorithm also provides a way of finding an element \eqref{AintB} in their intersection. One iteration \citep{boyd2011distributed} involves a cycle of updates on a triple of variables:
\begin{subequations}\label{ADMM}
\begin{align}
z'&=P_A(x-y)\label{ADMM1}\\
y'&=y+\alpha(z'-x)\label{ADMM2}\\
x'&=P_B(z'+y').\label{ADMM3}
\end{align}
\end{subequations}
We have followed the conventions in the ADMM review by \cite{boyd2011distributed} except in what we define to be the start and end of a cycle. Conventionally the final update is \eqref{ADMM2}, where the scaled dual variable $y$ is incremented by the difference of the two projections. For showing RRR/ADMM equivalence (see below) the projection to $B$ is the more convenient choice to end the cycle. This difference is irrelevant for fixed point behavior, where we see that $y'=y$ implies $z'=x=x'\in A\cap B$ is a solution to \eqref{AintB}. The constant $\alpha\in(0,2)$ is a relaxation parameter, where $\alpha<1$ corresponds to under-relaxation. To run ADMM the dual variables $y$ must be initialized in addition to $x$; a standard choice is $y=0$. With this initialization and $\alpha=0$, ADMM reduces to the alternating-projection algorithm. That alternating-projections often gets stuck (cycles between a pair of proximal points), when ADMM does not, shows that $\alpha\to 0$ is a singular limit.

\subsection{General properties}

The following general properties distinguish RRR and indicator-function-ADMM from other iterative algorithms.
\begin{itemize}

\item Problem instances are completely defined by a pair of projections.

\item Attractive fixed points encode solutions but, in general, are not themselves solutions.

\item The update rule respects Euclidean isometry.

\end{itemize}
The last property states that if $x_0,x_1,\ldots$ is a sequence of iterates generated by constraint sets $A$ and $B$, then for any Euclidean transformation $T$, the constraint sets $T(A)$ and $T(B)$ would generate the sequence $T(x_0),T(x_1),\ldots$ . This follows from the Euclidean norm minimizing property of projections and that the construction of new points from old is ``geometric". For example, the update rule
\begin{equation}\label{gamma}
x'=x+\beta\left(P_B((1+\gamma)P_A(x)-\gamma x)-P_A((1-\gamma)P_B(x)+\gamma x)\right)
\end{equation}
generalizes RRR (beyond $\gamma= 1$) and also respects Euclidean isometry.

\subsection{Unrelaxed ADMM/RRR equivalence}

RRR with $\beta=1$ is equivalent to indicator-function-ADMM with $\alpha=1$. To see this, define a shifted $x$ for ADMM by $\tilde{x}=x-y$, and use the update rules \eqref{ADMM} to determine $\tilde{x}'=x'-y'$. By \eqref{ADMM1} we have $z'=P_A(\tilde{x})$ and from \eqref{ADMM2} (with $\alpha=1$)
\begin{align*}
y'&=(x-\tilde{x})+(z'-x)\\
&=P_A(\tilde{x})-\tilde{x}.
\end{align*}
Finally, using \eqref{ADMM3}
\begin{align*}
\tilde{x}'&=x'-y'\\
&=P_B\left(P_A(\tilde{x})+P_A(\tilde{x})-\tilde{x}\right)-\left(P_A(\tilde{x})-\tilde{x}\right)\\
&=\tilde{x}+P_B\left(2 P_A(\tilde{x})-\tilde{x}\right)-P_A(\tilde{x}),
\end{align*}
we see that the shifted $x$ of ADMM has the same update rule as RRR with $\beta=1$.

\subsection{Local convergence of RRR}\label{sec:RRRconv}

Let $a\in A$ and $b\in B$ be mutually proximal points, and suppose $\|a-b\|$ is zero or sufficiently small that in a suitable neighborhood $U$ the sets $A$ and $B$ may be approximated as flats,
\begin{align*}
A&\approx\overline{A}+a\\
B&\approx\overline{B}+b,
\end{align*}
where $\overline{A}$ and $\overline{B}$ are linear spaces. Let $U=Z_\perp \oplus Z$ be the orthogonal decomposition of $U$, where  $Z_\perp=\overline{A}+\overline{B}$ is the span of the two spaces. Also decompose $Z_\perp$ orthogonally, as  $Z_\perp=X\oplus Y$, where $Y=\overline{A}\cap\overline{B}$. The two linear spaces now orthogonally decompose as $\overline{A}=C\oplus Y$ and $\overline{B}=D\oplus Y$, where $C$ and $D$ are linearly independent subspaces of $X$, or $C\cap D=\{0\}$. In the orthogonal decomposition $U=X\oplus Y\oplus Z$, we can write down the most general pair of proximal points as
\begin{subequations}\label{proxpoints}
\begin{align}
a&=(0,y,a_z)\\
b&=(0,y,b_z),
\end{align}
\end{subequations}
where $y\in Y$ is arbitrary and $a_z,b_z\in Z$ are fixed by the two flats.

Projections from a general point $(x,y,z)\in U$ have the following formulas,
\begin{subequations}\label{projformula}
\begin{align}
P_A(x,y,z)&=(P_C(x),y,a_z)\\
P_B(x,y,z)&=(P_D(x),y,b_z),
\end{align}
\end{subequations}
where $P_C$ and $P_D$ are the linear projections to the subspaces $C$ and $D$. The $\beta\to 0$ flow, now for the generalized RRR update \eqref{gamma}, takes the following form:
\begin{align}
\dot{x}&=\left((1+\gamma)P_D P_C-(1-\gamma)P_C P_D-\gamma (P_D+P_C)\right)(x)\label{xflow}\\
\dot{y}&=0\nonumber\\
\dot{z}&=b_z-a_z.\nonumber
\end{align}
We have fixed-point behavior only for $b_z=a_z$, when the proximal points coincide. From \eqref{proxpoints} the space of solutions, or $a=b$, is parameterized by $y\in Y$. However, for each such solution point the flow is free to choose any $z\in Z$ for its fixed point. To establish convergence to any of these fixed points we need to check that $x\to 0$ under the RRR flow. The same check applies in the infeasible case, $b_z\ne a_z$, since by \eqref{projformula} we see that $x\to 0$ ensures the projections $P_A$ and $P_B$ converge to the two proximal points \eqref{proxpoints}. To prove this result we need the following lemma:
\begin{lem}\label{lem1}
If $C \oplus C_\perp$ and $D \oplus D_\perp$ are two orthogonal decompositions of $X$, where \newline  $C+D=X$, then $C_\perp\cap D_\perp=\{0\}$.
\end{lem}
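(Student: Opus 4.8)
The plan is to reduce the statement to the single standard fact that the orthogonal complement of a sum equals the intersection of the orthogonal complements. First I would observe that since $C\oplus C_\perp$ and $D\oplus D_\perp$ are \emph{orthogonal} decompositions of $X$, the complementary summands are forced to be the orthogonal complements: $C_\perp=C^\perp$ and $D_\perp=D^\perp$. Thus the claim $C_\perp\cap D_\perp=\{0\}$ is exactly $C^\perp\cap D^\perp=\{0\}$, and the whole lemma becomes a statement about orthogonal complements alone.

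The key step is the identity $C^\perp\cap D^\perp=(C+D)^\perp$, which I would establish by double inclusion. For the inclusion $\subseteq$, a vector $v$ orthogonal to every element of $C$ and to every element of $D$ is orthogonal to every sum $c+d$ with $c\in C$, $d\in D$, hence lies in $(C+D)^\perp$. For the reverse inclusion $\supseteq$, a vector orthogonal to all of $C+D$ is in particular orthogonal to $C$ (take the $D$-summand to be zero) and to $D$, so it lies in $C^\perp\cap D^\perp$. Both directions follow immediately from bilinearity of the inner product.

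Finally I would invoke the hypothesis $C+D=X$. This gives $(C+D)^\perp=X^\perp=\{0\}$, and combining with the identity above yields $C^\perp\cap D^\perp=\{0\}$, which is the desired conclusion.

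There is essentially no obstacle here; the argument is a two-line consequence of the complement-of-sum identity. The only point meriting a moment of care is that these orthogonal-complement relations are used in the ambient space $U$ of the RRR local-convergence analysis, which is finite-dimensional Euclidean space, so every subspace is closed and the identities hold with no topological caveats.
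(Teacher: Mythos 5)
Your proof is correct and is essentially the same argument as the paper's: both identify $C_\perp$ and $D_\perp$ with the orthogonal complements $C^\perp$ and $D^\perp$ and observe that a vector in their intersection is orthogonal to every sum $u+v$ with $u\in C$, $v\in D$, hence to all of $X=C+D$, hence zero. The only cosmetic difference is that you package this as the identity $C^\perp\cap D^\perp=(C+D)^\perp$ (proving both inclusions, of which only one is needed), while the paper argues the needed inclusion directly.
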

\begin{proof}
From
\begin{align*}
C_\perp&=\{x\in X\colon u^T x=0,\; \forall\, u\in C\}\\
D_\perp&=\{x\in X\colon v^T x=0,\; \forall\, v\in D\},
\end{align*}
it follows that if $x^*\in C_\perp\cap D_\perp$, then
\[
(u+v)^T x^* =0,\; \forall\, u\in C, v\in D.
\]
But this can only be true if $x^*=0$ since
\[
X=\{u+v\colon u\in C, v\in D\}.
\]
\end{proof}

\begin{thm}\label{thm:RRRconv}
The distance $\|x\|$ from the space of fixed points in the local RRR flow, for the generalized form \eqref{gamma}, is strictly decreasing for $x\ne 0$ and $\gamma>0$.
\end{thm}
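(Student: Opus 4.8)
The plan is to show that $\|x\|^2$ is strictly decreasing along the flow, which for $x\neq 0$ is equivalent to the stated claim about $\|x\|$. Writing the right-hand side of \eqref{xflow} as $\dot x = Lx$ with
\[
L=(1+\gamma)P_D P_C-(1-\gamma)P_C P_D-\gamma(P_D+P_C),
\]
I would observe that
\[
\frac{d}{dt}\,\tfrac12\|x\|^2=\langle x,\dot x\rangle=\langle x,Lx\rangle,
\]
thereby reducing the whole problem to proving that the quadratic form $\langle x,Lx\rangle$ is strictly negative for every nonzero $x\in X$.

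Next I would evaluate this form using the two structural facts about $P_C$ and $P_D$: each is an orthogonal projection, hence symmetric and idempotent. Symmetry gives the identity $\langle x,P_DP_Cx\rangle=\langle P_Dx,P_Cx\rangle=\langle x,P_CP_Dx\rangle$, so the two ``cross'' terms of $L$ collapse to the single inner product $\langle P_Cx,P_Dx\rangle$ with combined coefficient $(1+\gamma)-(1-\gamma)=2\gamma$, while idempotence turns the diagonal terms into $-\gamma(\|P_Cx\|^2+\|P_Dx\|^2)$. Completing the square then yields the clean expression
\[
\langle x,Lx\rangle=-\gamma\,\|P_Cx-P_Dx\|^2.
\]
This completing-the-square step is the crux of the argument, and it immediately shows the form is nonpositive whenever $\gamma>0$.

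It remains to establish strictness, that is, that $P_Cx=P_Dx$ forces $x=0$; this is where the main obstacle lies and where the hypotheses $C\cap D=\{0\}$ and $C+D=X$ (both inherited from the decomposition of $Z_\perp$) must be brought in. If $P_Cx=P_Dx$, their common value lies in $C\cap D=\{0\}$, so $P_Cx=P_Dx=0$, meaning $x\in C_\perp\cap D_\perp$ with complements taken inside $X$. Since $C+D=X$, Lemma \ref{lem1} gives $C_\perp\cap D_\perp=\{0\}$, whence $x=0$. Therefore, for $x\neq 0$ and $\gamma>0$ we have $\langle x,Lx\rangle<0$, so $\frac{d}{dt}\|x\|^2<0$ and $\|x\|$ is strictly decreasing, as claimed.
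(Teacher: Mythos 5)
Your proposal is correct and follows essentially the same route as the paper: both reduce the claim to showing that the quadratic form $Q(x)=x^T(P_C+P_D-P_CP_D-P_DP_C)\,x$ is positive definite on $X$, and both invoke $C\cap D=\{0\}$ together with Lemma \ref{lem1} to rule out nontrivial null vectors. The only difference is cosmetic: you express $Q(x)$ as the single square $\|P_Cx-P_Dx\|^2$ by completing the square, whereas the paper uses the idempotency identity to write it as the sum of squares $\|P_{D_\perp}P_C\,x\|^2+\|P_DP_{C_\perp}\,x\|^2$; your version makes the null-vector argument marginally more direct but is otherwise the same proof.
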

\begin{proof}
Using the flow equation \eqref{xflow} in the time derivative of the squared distance,
\[
\frac{d}{dt}\,\|x\|^2=2 x^T\dot{x},
\]
and the symmetry of projections under transpose,
\[
x^T(P_D P_C-P_C P_D)x=0,
\]
we obtain
\[
\frac{d}{dt}\,\|x\|^2=-2\gamma\, Q(x),
\]
where the result follows if we can show
\[
Q(x)=x^T (P_C+P_D-P_C P_D-P_D P_C)x
\]
is a positive definite quadratic form.
From the idempotency of projections we have the identity
\begin{align*}
P_C+P_D-P_C P_D-P_D P_C&=P_C(1-P_D)P_C+(1-P_C)P_D(1-P_C)\\
&=P_C P_{D_\perp}P_C+P_{C_\perp}P_D P_{C_\perp},
\end{align*}
where the last line is expressed in terms of projections to the orthogonal complements of $C$ and $D$ in $X$. Using this identity, the quadratic form can be expressed as a sum of squares:
\[
Q(x)=\|P_{D_\perp}P_C\, x\|^2+\|P_D P_{C_\perp}\, x\|^2.
\]
To show that $Q$ has no non-trivial null vector $x^*$, let $u=P_C\, x^*$, so $u\in C$. For the first square to vanish we must have $u\in D$, and therefore $u\in C\cap D=\{0\}$. From $P_C\, x^*=u=0$ we then have $x^*\in C_\perp$. Since now $P_{C_\perp}\,x^*=x^*$, for the second square to vanish we must have $x^*\in D_\perp$. Thus both squares vanish if and only if $x^*\in C_\perp\cap D_\perp$ which, by the lemma, implies $x^*=0$.
\end{proof}

\newpage

\section{}\label{sec:projappendix}

\subsection{Bilinear constraint}

The projection to the bilinear constraint, in its simple form when used in NMF, or as generalized for deep networks, can be treated in a unified way. Most generally we seek the map $(x,w,y)\to (x',w',y')$ that minimizes
\begin{equation*}
\|x'-x\|^2+\|w'-w\|^2+g^2(y'-y)^2
\end{equation*}
subject to the constraint
\begin{equation}\label{x'w'=y'}
x'\cdot w'=\Omega\, y'.
\end{equation}
For the simple bilinear constraint we set $\Omega=1$ and take the limit $g\to \infty$, replacing the variable $y'$ in a deep network by the known data $y$ in NMF. 
Introducing a scalar Lagrange multiplier variable $u$ to impose \eqref{x'w'=y'}, we obtain the following system of linear equations,
\begin{align*}
0&=x'-x-u\, w'\\
0&=w'-w-u\, x'\\
0&=y'-y+u\,\Omega/g^2,
\end{align*}
with solution \eqref{x'w'} for $x'$ and $w'$ in the simple case and augmented by
\begin{equation*}
y'=y-u\,\Omega/g^2
\end{equation*}
for deep networks.

Imposing \eqref{x'w'=y'} on the solution, we obtain the following equation for $u$,
\begin{equation}
0=\frac{p(1+u^2)+q u}{(1-u^2)^2}-\Omega y+(\Omega/g)^2u=h_0(u),
\end{equation}
where $p$ and $q$ are the scalars in \eqref{pq}. The uniqueness of the solution for $u$ and our method for computing it is based on the following lemma:
\begin{lem}\label{hmono}
In the domain $u\in(-1,1)$, and with parameters satisfying \eqref{q>2p}, the function $h_0(u)$ is strictly increasing and has a unique zero $u_0$ and point of inflection $u_2$.
\end{lem}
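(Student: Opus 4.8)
The plan is to strip $h_0$ down to its rational part and reduce the entire lemma to the positivity of a single cubic on $(-1,1)$. Write $h_0(u)=r(u)-\Omega y+(\Omega/g)^2 u$ with $r(u)=\bigl(p(1+u^2)+qu\bigr)/(1-u^2)^2$. The constant $-\Omega y$ and the linear term $(\Omega/g)^2 u$ do not affect $h_0''$ at all, and the linear term only helps the monotonicity claim since $(\Omega/g)^2\ge 0$. So everything rests on the derivatives of $r$, and the two assertions (strict increase, unique inflection) will turn out to be driven by the same inequality.

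First I would pin down the boundary behavior and reduce monotonicity to a cubic. The numerator of $r$ is $N(u)=pu^2+qu+p$, and using \eqref{q>2p} one checks in each sign case of $p$ that $N(1)=2p+q>0$ and $N(-1)=2p-q<0$; since the denominator $(1-u^2)^2\to 0^+$, this gives $h_0(u)\to+\infty$ as $u\to 1^-$ and $h_0(u)\to-\infty$ as $u\to -1^+$, so a zero exists by the intermediate value theorem. A direct differentiation gives
\[
r'(u)=\frac{M(u)}{(1-u^2)^3},\qquad M(u)=2pu^3+3qu^2+6pu+q,
\]
and because $(1-u^2)^3>0$ on $(-1,1)$ it suffices to show $M(u)>0$. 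This is the crux. Regroup $M(u)=q(3u^2+1)+2pu(u^2+3)$ and bound below using $q>2|p|$ and $3u^2+1>0$:
\[
M(u)>2|p|\,(3u^2+1)-2|p|\,|u|\,(u^2+3)=2|p|\,(1-|u|)^3\ge 0,
\]
where the collapse to $(1-|u|)^3$ uses the identity $(3t^2+1)-t(t^2+3)=(1-t)^3$ with $t=|u|$ (and the case $p=0$ is immediate, since then $M(u)=q(3u^2+1)>0$). Hence $h_0'(u)=r'(u)+(\Omega/g)^2>0$ on $(-1,1)$, and the zero $u_0$ is unique.

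For the inflection point I would note that $h_0''=r''$, and differentiating once more yields
\[
r''(u)=\frac{6\,L(u)}{(1-u^2)^4},\qquad L(u)=pu^4+2qu^3+6pu^2+2qu+p,
\]
so inflection points are exactly the sign changes of $L$ on $(-1,1)$. The convenient observation is that $L'(u)=2M(u)$, which I have just shown to be strictly positive; thus $L$ is strictly increasing. Evaluating $L(1)=4(2p+q)>0$ and $L(-1)=4(2p-q)<0$ shows $L$ crosses zero exactly once, from negative to positive, so there is a unique inflection point $u_2$ and $r''$ (hence $h_0''$) changes sign there.

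The main obstacle is the positivity of $M$ on $(-1,1)$; once the factorization $M(u)=q(3u^2+1)+2pu(u^2+3)$ and the identity $(3t^2+1)-t(t^2+3)=(1-t)^3$ are spotted, the rest is routine. What makes the argument clean is that $M$ does double duty: it is the numerator of $r'$ and also $\tfrac12 L'$, so the single inequality $M>0$ simultaneously yields the strict monotonicity (uniqueness of $u_0$) and the strict monotonicity of $L$ (uniqueness of $u_2$).
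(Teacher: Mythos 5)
Your proof is correct, and the core inequality is the same as the paper's: both arguments reduce strict monotonicity to the positivity of the cubic numerator of the first derivative, established via the bound $q>2|p|$ and the identity $(3t^2+1)-t(t^2+3)=(1-t)^3$, together with the sign check $N(\pm 1)=2p\pm q$ coming from \eqref{q>2p}. Where you genuinely diverge is in the treatment of the inflection point. The paper computes the full third derivative $h_3$ of the rational function and runs the $(1-|u|)^3$ bound a second time on its (quintic-denominator) numerator to show $h_2$ is strictly increasing with range $(-\infty,+\infty)$. You instead work only with the quartic numerator $L$ of $h_0''$ and observe that $L'=2M$, so the single positivity result for $M$ does double duty: it gives $h_0'>0$ and it makes $L$ strictly increasing, whence the endpoint values $L(\pm 1)=4(2p\pm q)$ force exactly one sign change. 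This saves a derivative, avoids any estimate on a second, more complicated polynomial, and makes transparent why the two uniqueness claims are really one inequality in disguise. (A small bonus of your phrasing: the chain $M(u)>2|p|(3u^2+1)-2|p|\,|u|(u^2+3)$ is strict because $q>2|p|$ multiplies the strictly positive factor $3u^2+1$, so it covers $u=0$ and $p=0$ uniformly, whereas the paper's intermediate step $2pu>-q|u|$ is not strict at $u=0$ and needs the reader to patch that point.) One presentational note: the paper's statement of the lemma and its Newton--bisection root-finder rely on $h_0$ having range $(-\infty,+\infty)$ and a unique inflection point of the \emph{full} function $h_0$; since your $L$ analysis shows $h_0''$ changes sign exactly once, and your boundary analysis of $N$ gives the divergence of $h_0$ at $u\to\pm 1$, everything the paper needs downstream is delivered.
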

\begin{proof}
In addition to $h_0$, we will need its first ($h_1$), second ($h_2$), and third ($h_3$) derivatives:
\begin{align*}
h_1(u)&=\frac{q(1+3 u^2)+2p u(3+u^2)}{(1-u^2)^3}+(\Omega/g)^2\\
h_2(u)&=3\,\frac{q u(4+4u^2)+2p(1+6 u^2+u^4)}{(1-u^2)^4}\\
h_3(u)&=12\,\frac{q(1+10 u^2+5 u^4)+2p u(5+10u^2+u^4)}{(1-u^2)^5}.
\end{align*}
Let $t_0(u)=p(1+u^2)+q u$ be the numerator of the first term in $h_0(u)$. Using \eqref{q>2p}, $t_0(-1)=-(q-2p)<0$ and $t_0(1)=q+2p>0$, so that $\lim_{u\to \pm 1}h_0(u)=\pm\infty$. Since $h_0(u)$ is continuous, its range is $(-\infty,+\infty)$. Using the same arguments we can show that exactly the same conclusion applies to $h_2(u)$. 

Now consider the numerator $t_1(u)$ of the first term in $h_1(u)$. Again using \eqref{q>2p}, and $|u|\le 1$, we have $2pu>-q |u|$ and the bound,
\begin{equation*}
t_1(u)>q+3 q u^2-q |u|(3+u^2)=q(1-|u|)^3.
\end{equation*}
This implies
\begin{equation*}
h_1(u)>\frac{q}{(1+|u|)^3}+(\Omega/g)^2>0,
\end{equation*}
and therefore $h_0(u)$ is strictly increasing. We always have a zero $u_0\in (-1,1)$ because $h_0(u)$ has range $(-\infty,+\infty)$. By the same argument we find that
\begin{equation*}
h_3(u)>\frac{12 q}{(1+|u|)^5}>0,
\end{equation*}
so that $h_2(u)$ has a unique zero, giving $h_0(u)$ a unique inflection point $u_2\in (-1,1)$.
\end{proof}

By lemma \eqref{hmono}, there is a unique root $u_0$ of $h_0(u)$ and therefore a unique projection whenever \eqref{q>2p} holds. The other properties of $h_0(u)$ motivate the following two-mode algorithm for finding $u_0$.

Start with $u_a=0$ as the ``active bound" on $u_0$; this will be the base point for a Newton iteration. Depending on the sign of $h_0(u_a)$, $u_b=\pm 1$ will be the initial ``bracketing bound" on $u_0$. From the Newton update
\begin{equation}
u'=u_a-\frac{h_0(u_a)}{h_1(u_a)},
\end{equation}
we take one of two possible actions. If $u'$ is in the interval bracketed by $u_b$, we set $u'_a=u'$ and reset the bracketing bound $u_b'=u_a$ if the sign of $h_0(u_a')$ has changed (keeping $u_b'=u_b$  otherwise). If, on the other hand, $u'$ is outside the interval bracketed by $u_b$, the new active bound is obtained by bisection, $u_a'=(u_a+u_b)/2$, and $u_b'$ is set to either of the previous bounds, $u_a$ or $u_b$, depending on the sign of $h_0(u_a')$.

By taking either a Newton step or a bisection step, the interval bracketing $u_0$ is made smaller. By the lemma's unique inflection point $u_2$, eventually $h_2$ will have the same sign at both endpoints of the interval. The function $h_0$ is now convex/concave on the interval and all subsequent iterations always take the Newton step, converging quadratically to the root $u_0$. The case $u_0=u_2$ presents an exception, but the convergence by bisection steps will still be linear.

\newpage

\section{}\label{sec:binarycodeappendix}

\subsection{Binary encoding with continuous weights and step activation}

It is a straightforward exercise to completely characterize the weights and biases that solve the binary encoding problem of section \ref{sec:binaryencoding} when the network is trained with the step-activation constraint shown in Figure \ref{fig:fig2}. Combinatorially there are $(2^n)!$ solutions, corresponding to how the 1-hot positions are mapped to the integers $0,1,\ldots, 2^n-1$. Consider one such solution and let $j\in C$ be the code node that codes a particular bit, and $D_1(j)\subset D$ be the corresponding 1-hot positions/integers that have a 1 in their binary representation for that bit. Let $D_0(j)$ be the complement of $D_1(j)$, that is, the subset of input nodes which are assigned a 0 for bit $j$. If $\Delta$ is the gap in the step-activation, and neglecting the weight-normalization constraint for now, a necessary and sufficient set of constraints on the parameters for correct encoding is
\begin{subequations}\label{binrep0}
\begin{align}
\forall j\in C,\;\;\forall i\in D_1(j):\quad &w[i\to j]/\Omega-b[j]\ge\Delta/2\\
\forall j\in C,\;\;\forall i\in D_0(j):\quad &w[i\to j]/\Omega-b[j]\le -\Delta/2.
\end{align}
\end{subequations}
Combining these to eliminate the biases we obtain
\begin{equation}\label{binrep1}
\forall j\in C,\;\;\forall i\in D_1(j),\;\;\forall i'\in D_0(j):\quad w[i\to j]-w[i'\to j]\ge \Omega\,\Delta.
\end{equation}
Now define
\begin{subequations}
\begin{align}
w_+(j)&=\min_{i\in D_1(j)}w[i\to j]\\
w_-(j)&=\min_{i\in D_0(j)}-w[i\to j].
\end{align}
\end{subequations}
Supposing our weights satisfy \eqref{binrep1}, then
\begin{equation}
\forall j\in C:\quad w_+(j)+w_-(j)\ge \Omega\,\Delta
\end{equation}
and this guarantees that the constraints on the biases from \eqref{binrep0}
\begin{equation}
\forall j\in C:\quad  -w_-(j)/\Omega+\Delta/2\le b[j]\le w_+(j)/\Omega-\Delta/2
\end{equation}
always has a solution.

When the weights into node $j$ have norm $\Omega$, the inequalities \eqref{binrep1} will not have a solution when $\Delta$ is too large. To obtain the precise limit we use the following:
\begin{lem}\label{gaplem}
Suppose $(x,y)\in \mathbb{R}^2$ satisfy $x-y\ge a> 0$. Then $x^2+y^2\ge a^2/2$ and the equality case corresponds to $(x,y)=(a/2,-a/2)$.
\end{lem}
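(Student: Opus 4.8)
The plan is to reduce this to a one-line algebraic inequality via a change of variables that diagonalizes the quadratic form in terms of the constrained quantity $x-y$. The key observation is the identity
\begin{equation*}
x^2+y^2=\tfrac{1}{2}\left((x-y)^2+(x+y)^2\right),
\end{equation*}
which rewrites the objective as a sum of a term involving exactly the constrained combination $x-y$ and a nonnegative ``orthogonal'' term $(x+y)^2$.

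From here I would bound each piece separately. Since $x-y\ge a>0$, squaring the hypothesis gives $(x-y)^2\ge a^2$, and of course $(x+y)^2\ge 0$. Substituting these into the identity yields
\begin{equation*}
x^2+y^2\ge \tfrac{1}{2}\left(a^2+0\right)=\tfrac{a^2}{2},
\end{equation*}
which is the desired lower bound.

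For the equality characterization, I would track when both inequalities above are tight. Equality in $(x+y)^2\ge 0$ forces $x+y=0$, and equality in $(x-y)^2\ge a^2$ (given $x-y\ge a>0$) forces $x-y=a$. Solving this linear system gives $x=a/2$ and $y=-a/2$, confirming the stated equality case; conversely one checks directly that this point attains $x^2+y^2=a^2/2$.

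I anticipate no genuine obstacle here, as the result is elementary; the only point requiring a little care is the equality analysis, where one must verify that the two inequalities become equalities \emph{simultaneously} and that the resulting system is consistent (it is, since $x+y=0$ and $x-y=a$ are independent linear conditions). An alternative framing is geometric — minimizing distance to the origin over the half-plane $x-y\ge a$, whose boundary line lies at distance $a/\sqrt{2}$ from the origin — but the algebraic identity above makes both the bound and the equality case fall out most cleanly.
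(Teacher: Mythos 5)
Your proof is correct. The paper disposes of this lemma in one line by pure geometry: the minimum of $x^2+y^2$ over the half-plane $x-y\ge a$ is the squared distance from the origin to the boundary line $x-y=a$, namely $(a/\sqrt{2})^2=a^2/2$, uniquely attained at the foot of the perpendicular $(a/2,-a/2)$. You instead work algebraically through the rotated-coordinates identity $x^2+y^2=\tfrac12\bigl((x-y)^2+(x+y)^2\bigr)$, which is really the same geometry made explicit: $(x-y)/\sqrt2$ and $(x+y)/\sqrt2$ are the coordinates in the frame aligned with the constraint. What your version buys is a self-contained derivation with a transparent equality analysis (both pieces must be tight simultaneously, and the resulting linear system $x+y=0$, $x-y=a$ is nondegenerate), whereas the paper's version buys brevity at the cost of leaving the uniqueness claim implicit in the phrase ``foot of the perpendicular.'' You even flag the geometric framing as an alternative, so the two arguments are fully reconciled; there is no gap.
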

\begin{proof}
The minimum squared distance to the half-plane constraint is $a^2/2$ and is uniquely attained for the stated assignment.
\end{proof}
Consider an arbitrary matching of the nodes in $D_1(j)$ with the nodes in $D_0(j)$, and the corresponding $|D|/2$ instances of lemma \ref{gaplem} in the constraints \eqref{binrep1}. Additively combining the resulting norm inequalities we obtain
\begin{equation}
\Omega^2\ge (|D|/2)(\Omega\,\Delta)^2/2,
\end{equation}
or
\begin{equation}
\Delta\le \frac{2}{\sqrt{|D|}}.
\end{equation}
We only get equality when all $|D|/2$ inequalities of the matching are equalities, and for that case the lemma specifies a unique solution:
\begin{subequations}
\begin{align}
\forall j\in C,\;\;\forall i\in D_1(j):\quad &w[i\to j]=\frac{\Omega}{\sqrt{|D|}},\\
\forall j\in C,\;\;\forall i\in D_0(j):\quad &w[i\to j]=-\frac{\Omega}{\sqrt{|D|}}.
\end{align}
\end{subequations}

The analysis of the decoder is similar. For any $i\in D$ let $C_1(i)\subset C$ be the code nodes on which the corresponding integer assigned to $i$ has a 1 in its binary representation. For the same integer $i$ the nodes $C_0(i)$ in the complement have a 0 bit. Now define
\begin{equation}
\forall i\in D:\quad w_1(i)=\sum_{j\in C_1(i)} w[j\to i].
\end{equation}
The necessary and sufficient set of constraints on the parameters is now
\begin{subequations}\label{binrep2}
\begin{align}
\forall i\in D:\quad &w_1(i)/\Omega -b[i]\ge \Delta/2\\
\forall i\in D,\;\; \forall j\in C_1(i):\quad &(w_1(i)-w[j\to i])/\Omega -b[i]\le -\Delta/2\\
\forall i\in D,\;\; \forall j\in C_0(i):\quad &(w_1(i)+w[j\to i])/\Omega -b[i]\le -\Delta/2
\end{align}
\end{subequations}
where the last two inequalities cover, respectively, the case of a correct 1 bit flipping to 0 and a correct 0 bit flipping to 1. Comparing these inequalities with the first we infer
\begin{subequations}\label{binrep3}
\begin{align}
\forall i\in D,\;\;\forall j\in C_1(i):\quad &w[j\to i]\ge \Omega\,\Delta\\
\forall i\in D,\;\;\forall j\in C_0(i):\quad &w[j\to i]\le -\Omega\,\Delta.
\end{align}
\end{subequations}
When these inequalities are satisfied we can always find biases that satisfy \eqref{binrep2}. Moreover, since the norm of the weights into node $i$ is $\Omega$, from \eqref{binrep3} we obtain the inequality
\begin{equation}
\Omega^2\ge |C| (\Omega\,\Delta)^2
\end{equation}
or
\begin{equation}
\Delta\le\frac{1}{\sqrt{|C|}}.
\end{equation}
The equality case corresponds to only equalities in \eqref{binrep3}, that is, weights differing only in sign as dictated by membership of $j$ in $C_1(i)$ or $C_0(i)$.

\newpage

\bibliography{lwl}

\end{document}